\newcommand{\squishlist}{
   \begin{list}{$\bullet$}
    { \setlength{\itemsep}{0pt}      \setlength{\parsep}{3pt}
      \setlength{\topsep}{3pt}       \setlength{\partopsep}{0pt}
      \setlength{\leftmargin}{1.5em} \setlength{\labelwidth}{1em}
      \setlength{\labelsep}{0.5em} } }
\newcommand{\squishend}{  \end{list}  }
\setlist[enumerate]{nosep}
\newbox\mybox 
\newdimen\myboxwidth    
\newcommand\addpicture[3]{%
\setbox\mybox=\hbox{\includegraphics[scale=#3]{#2}}
\myboxwidth\wd\mybox    
\renewcommand\windowpagestuff{%
\includegraphics[scale=#3]{#2}
\captionof{figure}{A test figure.}}
\parpic[#1]{%
\begin{minipage}{\myboxwidth}
 \windowpagestuff 
\end{minipage} 
} }
\newcommand{\ignore}[1]{}
\newcommand{\cA}{{\cal A}}
\newcommand{\cC}{{\cal C}}
\newcommand{\cI}{{\cal I}}
\newcommand{\cX}{{\cal X}}
\newcommand{\eps}{\varepsilon}
\newcommand{\calL}{{\cal L}}
\newcommand{\calF}{{\cal F}}
\newcommand{\calC}{{\cal C}}
\newcommand{\calX}{{\cal X}}
\newcommand{\calI}{{\cal I}}
\newcommand{\calJ}{{\cal J}}
\DeclareMathOperator{\opt}{OPT}
\newcommand{\mypar}[1]{\medskip\noindent{\bfseries #1.}~}
\newcommand{\etal}{{et al.}\xspace}
\DeclareMathOperator{\EM}{EM}
\newcommand\Tstrut {\rule{0pt}{3ex}}         
\newcommand\Bstrut {\rule[-1.3ex]{0pt}{0pt}}   
\newcommand{\floor}[1]{\ensuremath{\left\lfloor#1\right\rfloor}}
\newcommand{\be}{\begin{enumerate}}
\newcommand{\ee}{\end{enumerate}}
\newcommand{\bd}{\begin{description}}
\newcommand{\ed}{\end{description}}
\newcommand{\bi}{\begin{itemize}}
\newcommand{\ei}{\end{itemize}}
\newtheorem{theorem}{Theorem}
\newtheorem{lemma}[theorem]{Lemma}
\newtheorem{claim}[theorem]{Claim}
\newtheorem{definition}[theorem]{Definition}
\newtheorem{remark}{Remark}
\newenvironment{proof}{\par \smallskip{\bf Proof:}}{\hfill\stopproof}
\def\stopproof{\square}
\def\square{\vbox{\hrule height.2pt\hbox{\vrule width.2pt height5pt \kern5pt
\vrule width.2pt} \hrule height.2pt}}
\newenvironment{prog}[1]{
\begin{minipage}{5.8 in}
{\sc\bf #1}
\begin{enumerate}}
{
\end{enumerate}
\end{minipage}
}
\renewcommand{\phi}{\varphi}
\newcommand{\rs}{R^{\star}}
\DeclareMathOperator{\hkm}{HKM}
\DeclareMathOperator{\skm}{SKM}
\DeclareMathOperator{\algif}{ALG-IF}
\DeclareMathOperator{\alggf}{GF}
\DeclareMathOperator{\algtf}{ALG-CF}
\DeclareMathOperator{\optif}{OPT-IF}
\DeclareMathOperator{\opttf}{OPT-CF}
\DeclareMathOperator{\supp}{supp}
\newcommand{\tf}{combined fairness}
\DeclareMathOperator*{\argmax}{argmax}
\DeclareMathOperator{\MAD}{max-violation}
\DeclareMathOperator{\bias}{bias}
\DeclareMathOperator{\TV}{TV}
\DeclareMathOperator{\KL}{KL}
\newcommand{\ALG}{\textsc{ALG-IF}\xspace}
\newcommand{\fairness}{{\cal F}}
\newcommand{\iopt}{\opt_{k,p,f,\fairness}}
\newcommand{\vopt}{\opt_{k,p}}
\newcommand{\total}{{\sc combined fair $(k,p,f,\fairness)$-clustering }}
\newcommand{\Real}{{\mathbb R}}
\newcommand{\Ex}{{\mathbb E}}
\newcommand{\kfp}{{\sc individually fair $(k,p,f,\fairness)$-clustering }}
\newcommand{\kcf}{{\sc individually fair $(f,\fairness)$ $k$-center }}
\newcommand{\vkp}{{\sc vanilla $(k,p)$-clustering }}
\newcommand{\vkc}{{\sc vanilla $k$-center }}
\newcommand{\ifa}{{\sc individually fair $p$-assignment }}
\newcommand{\fairopt}{\textsc{fair-assgn }}
\newcommand{\fairoptkc}{\textsc{fair-assgn-kc }}
\newcommand{\LPB}{\textsc{LP-Bias }}
\newcommand{\LPBD}{\textsc{LP-Bias-Dual }}
\newcommand{\LPEM}{\textsc{LP-EM }}
\newcommand{\gr}{|G_r|}
\newcommand{\totalassgn}{\textsc{combined-fair-assgn }}
\title{Distributional Individual Fairness in Clustering}
\author{%
  Nihesh Anderson \\
  Department of Computer Science\\
  IIIT Delhi\\
   \And
   Suman K. Bera \\
  Department of Computer Science\\
  UC Santa Cruz\\
   \AND
   Syamantak Das \\
   Department of Computer Science\\
   IIIT Delhi \\
   \And
   Yang Liu \\
   Department of Computer Science\\
   UC Santa Cruz \\
}
\begin{document}

\maketitle

\begin{abstract}
 In this paper, we initiate the study of fair clustering that ensures distributional similarity among similar individuals. In response to improving fairness in machine learning, recent papers have investigated fairness in clustering algorithms and have focused on the paradigm of statistical parity/group fairness. These efforts attempt to minimize bias against some protected groups in the population. However, to the best of our knowledge, the alternative viewpoint of individual fairness, introduced by Dwork et al. (ITCS 2012) in the context of classification, has not been considered for clustering so far. Similar to Dwork et al., we adopt the individual fairness notion which mandates that similar individuals should be treated similarly for clustering problems. We use the notion of $f$-divergence as a measure of statistical similarity that significantly generalizes the ones used by Dwork et al. We introduce a framework for assigning individuals, embedded in a metric space, to probability distributions over a bounded number of cluster centers. The objective is to ensure (a) low cost of clustering in expectation and (b) individuals that are close to each other in a given fairness space are mapped to statistically similar distributions.

We provide an algorithm for clustering with $p$-norm objective ($k$-center, $k$-means are special cases) and individual fairness constraints with provable approximation guarantee. We extend this framework to include both group fairness and individual fairness inside the protected groups. Finally, we observe conditions under which individual fairness implies group fairness. We present extensive experimental evidence that justifies the effectiveness of our approach.

\end{abstract}

\section{Introduction}

Increasing deployment of machine learning based systems in decision making tasks such as targeted ad placement~\cite{speicher2018potential}, issuing home loans~\cite{bartlett2019consumer},
predicting recidivism~\cite{propublica,chouldechova2017fair}, and gender inequality at workplace~\cite{datta2015automated,Miller2015}  mandates that such algorithms are fair to individuals or groups in a population. An increasing body of research over the last decade has attempted to define various notions of fairness in such systems and design efficient learning algorithms that respect these fairness constraints (see the excellent survey by Mehrabi~\etal~\cite{mehrabi2019survey}). 

Clustering is a classical unsupervised learning technique with wide applications in domains such as recommender systems \cite{sarwar2002recommender}, customer segmentation \cite{chen2012data}, feature generation \cite{larsen1999fast,kg2006feature}, 
targeted advertisement \cite{aggarwal2004method}, etc. The seminal work of  Chierichetti et al.~\cite{CKLV18} initiated the study of {\em group} fairness (also called statistical fairness) in clustering. Group fairness requires that the representation of various protected groups in all the clusters should be balanced. The work of~\cite{CKLV18} was immediately followed up by several researchers~{\cite{RS18,Bercea2018,Backurs2019,bera19,Ahmadian:2019,huang2019coresets} leading to efficient algorithms for a wide variety of clustering problems under group fairness constraints. 
	
In this paper, we consider the alternate viewpoint of {\em individual fairness} introduced in the influential work of Dwork et al.~\cite{dwork2012fairness} in the context of classification problems. To the best of our knowledge, this particular notion of individual fairness has not been previously studied for clustering problems. Our main motivation is to address the possibility of standard clustering algorithms or clustering algorithms enforcing group fairness being unfair to `similar' individuals, as illustrated by~\Cref{fig:unfairBoundary} and~\Cref{fig:combined_fairness}. Taking~\Cref{fig:combined_fairness} for example, 
group fairness demands that, in each cluster, roughly one-third of the points must be circles (red). Let $R_L$ and $B_L$ be the sets of red and blue points on the left respectively. Naturally, two of the points from 
the set $R_{L}$ , marked with oval, needs to be assigned to the cluster $C_R$ on the right. However, this would violate {\em individual}
fairness between the points inside the oval and the remaining points in $R_{L}\cup B_L$. In fact, it has been shown that forcing {\em group} fairness can lead to {\em disparate treatment} 
of similar individuals or open up the possibility of gerrymandering by unfairly targeting a subgroup of a protected group --- see~\cite{kearns2018preventing,kearns2019empirical}.  





\begin{figure}[!ht]
	\centering
	\begin{minipage}[b]{.35\textwidth}
		\centerline{\includegraphics[height=0.8\linewidth,width=1\linewidth]{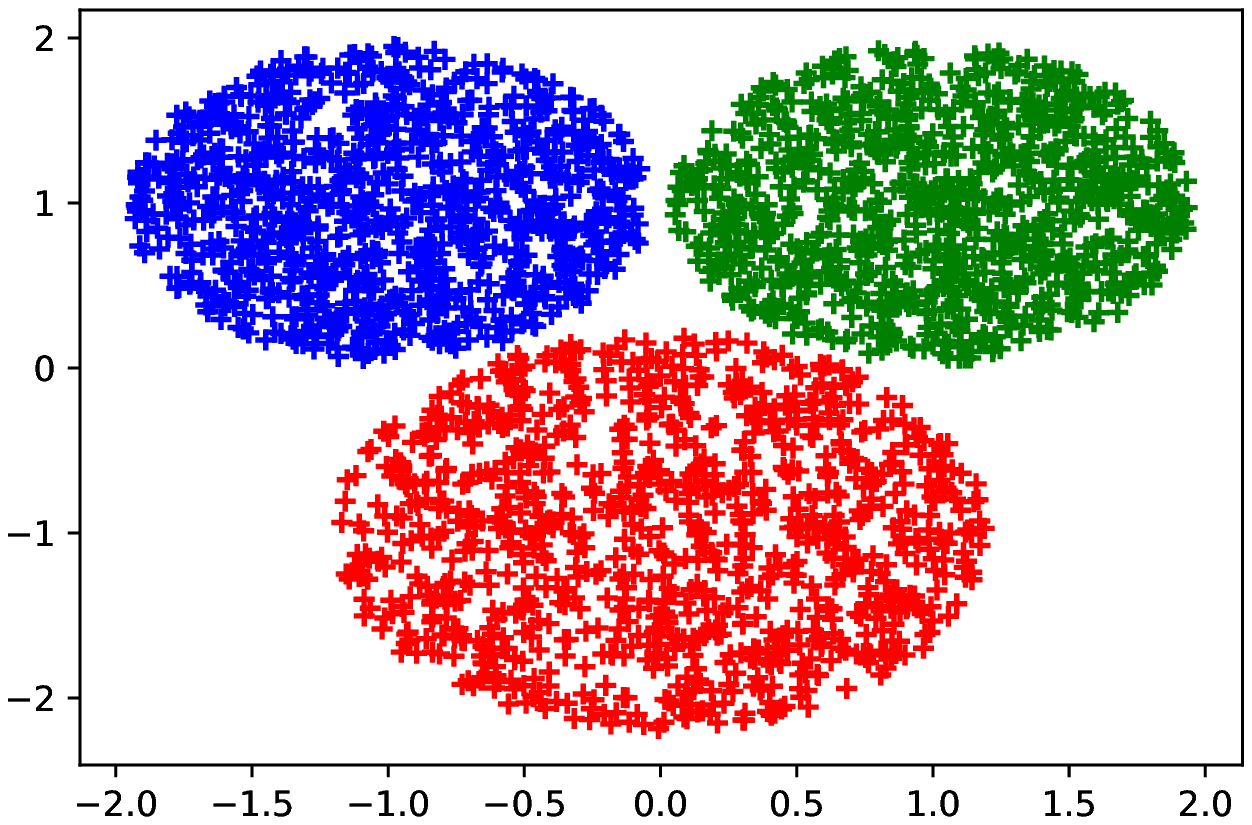}}
		\caption{A simple problem to illustrate the possibility of harming the individuals (on the boundaries of three clusters) via off-the-shelf clustering method.}
		\label{fig:unfairBoundary}
	\end{minipage}\qquad
	\begin{minipage}[b]{.5\textwidth}
		\centering
		\includegraphics[width=1\linewidth,bb={0 0 10in 10in},trim={2.5in 9.5in 7in 1in},clip=true]{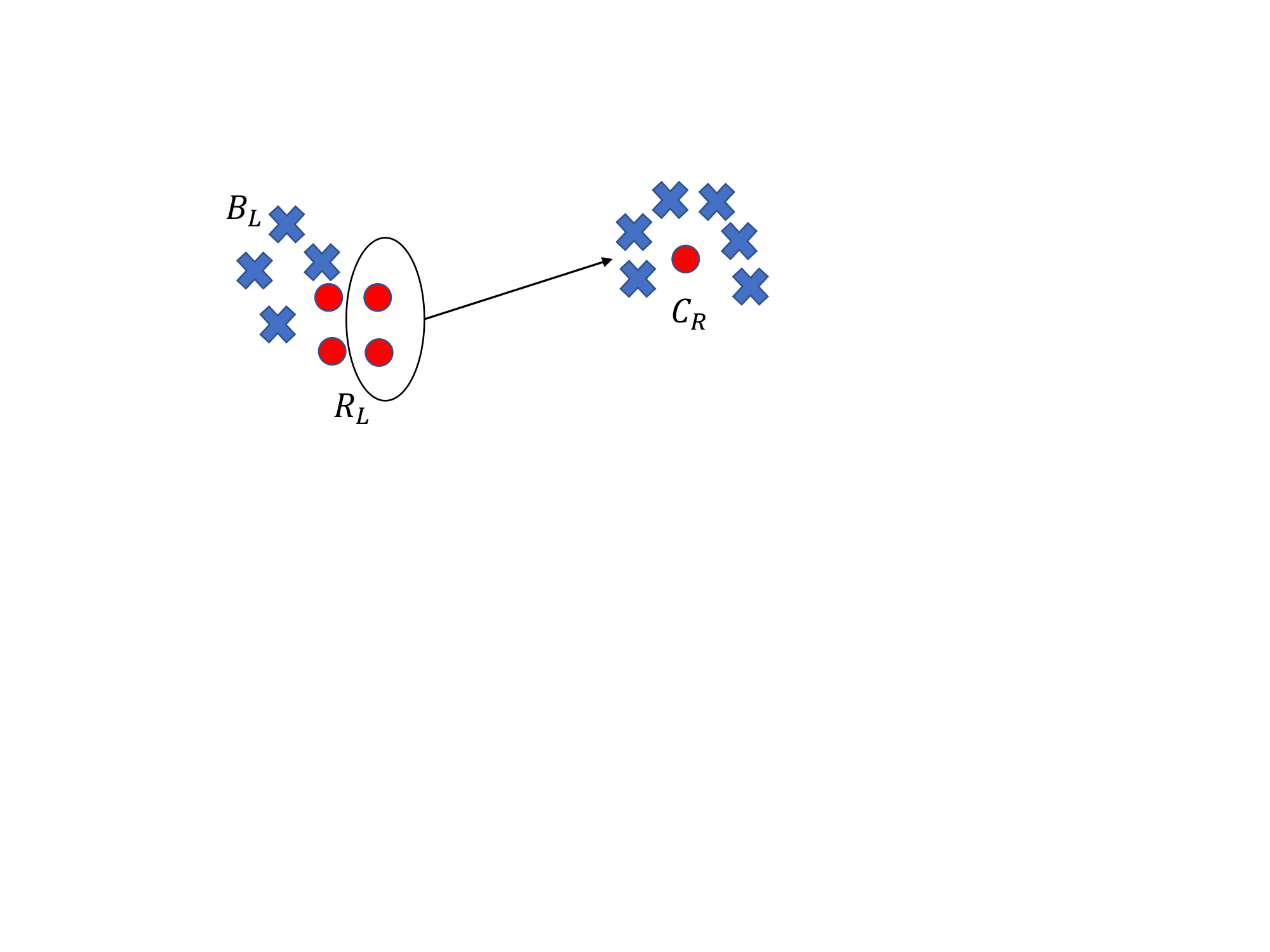}
		\caption{Group fairness might affect Individual Fairness. Moving the red points within the oval from $R_L$ to the right cluster $C_R$ would violate {\em individual}
			fairness constraints between these points and the remaining points in $R_{L}$ }
		\label{fig:combined_fairness}
	\end{minipage}
\end{figure}

\noindent
{\bf Our notion: Individual fairness in $k$-clustering.}
In $k$-clustering problems ($k$-means, $k$-median, $k$-center, etc.), the input consists of a set of
points $V$ embedded in a known metric space. The goal
is to partition the points into $k$ clusters while minimizing some distance-based objective function. 
We propose a randomized assignment of points to centers as part of our solution concept. Inspired by ideas from Dwork et al.~\cite{dwork2012fairness}, our algorithm produces a set of $k$ centers denoted by $\calC$, and a mapping of each point $x\in V$ to a distribution over the $k$ centers, while minimizing the expected clustering cost. Note that this is related to probabilistic clustering solutions such as {\em soft} $k$-means~\cite{DudaHS01} or {\em fuzzy} $k$-means}~\cite{kg2006feature,bezdek2013pattern}. However, we show in our experiments that these solutions can be unfair to individuals.

We enforce individual fairness between points through distributional similarity. We assume a fairness similarity measure $\fairness:V\times V \rightarrow \Real_{\geq 0}$ (not necessarily a metric) that maps every pair of points in the population to some non-negative real number. We require the statistical distance between the output distributions of two points in $V$, measured by $f$-divergence~\cite{csiszar1964informationstheoretische,morimoto1963markov,ali1966general}, to be upper bounded by their $\fairness$-measure. 
This is analogous to the definition of individual fairness in classification by Dwork et al.~\cite{dwork2012fairness}, where they utilize the special cases of $f$-divergence, namely, {\em total variational distance} and {\em relative $\ell_{\infty}$ metric}. However, in classification, either one has to assume the knowledge of a similarity measure as side information, or face the non-trivial task of computing~\cite{KleinbergT02} or learning the same~\cite{zemel2013learning}. On the other hand, in clustering problems, the distance metric $d$ provided by the feature space can be considered as a natural choice of the fairness similarity measure. However, we emphasize that all our results hold for any arbitrary choice of fairness similarity measure.   

\subsection{Our Contribution}
Our main contributions can be summarized as follows:

\squishlist
\item {\bf Distributional Individual Fairness for Clustering:} We introduce distributional individual fairness for $\ell_p$-norm clustering problems using a general family of divergence functions.

\item  {\bf Approximation algorithms for Individually Fair Clustering:} We provide a generic solution template that adapts any algorithm for $\ell_p$-norm clustering objective to an individually fair solution. In particular, we give an algorithm for the individually fair $\ell_p$-norm $k$-clustering problem that achieves a constant factor approximation guarantee (Theorem~\ref{thm:individual_fairness}).  

\item {\bf Algorithms for Combined Fairness:} We show connections between individual fairness and group fairness, and extend our solution to combine the two paradigms. One interesting aspect of this result is that we enforce individual fairness only among the individuals belonging to the same protected group. We justify this relaxation in~\Cref{appendix:IF_SF} by demonstrating that the more stringent requirement of individual fairness across every pair of points can lead to trivial and expensive solutions. Our framework can be seamlessly combined with ideas developed in~\cite{bera19} to give a constant factor approximation algorithm that guarantees both group fairness (in expectation) and individual fairness among members of the same group (Theorem~\ref{thm:combined_fairness}).

\squishend
We provide extensive empirical evidence to support the effectiveness of our method.\footnote{We are contributing our code to the community.} Experiments show that our method achieves objective cost much better than predicted by our theoretical analysis while respecting individual fairness. Our solution is probabilistic. A single realization according to the distribution that our algorithm produces, might still be unfair to a pair of similar individuals. However, when the clustering algorithm is used upon repeated trials (e.g., profiling a customer for a sequence of different product recommendations), they would be assigned to the clusters with similar empirical distributions. This is the scenario our solution focuses on and tries to address.

\subsection{Related Work}
\label{sec:related}

Fairness in machine learning is a fast-evolving topic --- see~\cite{mehrabi2019survey} 
for a comprehensive survey of recent advances in this area. Our work mainly concerns with individual
fairness, a concept introduced by Dwork~\etal~\cite{dwork2012fairness}. Subsequently
in~\cite{zemel2013learning,lahoti2019ifair,lahoti13operationalizing}, the authors proposed methodologies to 
learn the similarity measure in order to achieve individual fairness. ~\cite{biega2018equity,speicher2018unified,kamishima2011fairness} 
also explored the direction of implicitly
learning the similarity measure in the context of ranking and classification problems. 
The approach of combining individual fairness and group fairness has been initiated in~\cite{dwork2012fairness} and further explored in~\cite{lahoti13operationalizing,speicher2018unified}.
However, none of these works consider the important case of clustering.

For clustering problems, in a seminal work, Chierichetti~\etal~\cite{CKLV18} initiated the study of fairness.
Their notion of fairness is defined at a group level ---
the population is partitioned into two protected groups and each group 
required to be {\em well-represented} in each cluster.
Subsequently, this notion has been greatly generalized to include more than 
two protected groups~\cite{RS18,Bercea2018,bera19,Ahmadian:2019}, and the groups are even allowed to be overlapping~\cite{bera19}.
The fairness notion advocated by these works operate within the ambit of
{\em disparate impact} doctrine~\cite{feldman2015certifying} --- each protected group
must be {\em almost} equally represented in the outcome of any algorithm. 
\cite{schmidt2018fair,Backurs2019,huang2019coresets} focused on designing scalable algorithms achieving group fairness.
Few other notions of fairness have been considered in the clustering domain such as
{\em proportionally} fair clustering~\cite{chen2019proportionally}, 
fair selection of cluster centers~\cite{Kleindessner2019,chiplunkar2020solve} and fair spectral clustering~\cite{Kleindessner2019spectral}.
None of these works address the question of
individual fairness and are orthogonal to the direction 
we take in this paper. Recently,~\cite{jung_center,mahabadi2020individual} consider a notion of individual fairness which requires every point $j$ to have a center within a distance of $r_j$ where $r_j$ is the minimum radius ball centered at $j$ that contains at least $n/k$ points. Our notion of individual fairness differs significantly from this notion and is not directly comparable. However, in our experiments, we consider a fairness similarity measure inspired by these works.



\section{Problem Definitions and Preliminaries}
\label{sec:prelim}

We begin with the definition of statistical similarity between two distributions used in formulating individual fairness in clustering. 
\begin{definition}[$f$-divergence]
\label{def:f-div}
Let $P,Q$ be two probability measures on a discrete space $\calX$. Then for any function $f:[0,\infty)\rightarrow \Real$, where $f$ is strictly convex at 1 and $f(1)=0$, the $f$-divergence between $P$ and $Q$ is defined as $D_f(P||Q) = \sum_{x\in \calX}f(\frac{P(x)}{Q(x)})Q(x) $
\end{definition}
The above definition requires the following two assumptions for completeness:
\begin{inparaenum}[\bfseries (1)]
    \item $0\cdot f(\frac{0}{0}) = 0$ \,,
    \item $0\cdot f(\frac{a}{0}) = \lim_{x\to 0^{+}} xf(\frac{a}{x}) $ \,.
\end{inparaenum}
Some popular instances of $f$-divergence include total variation distance $D_{\TV}$ $\left( f(t)=\frac{1}{2}|t-1|\right)$ and  $\KL$-divergence $\left( f(t)=t\log t\right)$.

Next, we define various clustering problems that we shall consider in subsequent sections. Let $V$ be a set of points embedded in some metric space $(\cX,d)$. We use $[n]$ to denote the set $\{1,2,\cdots n\}$.
\begin{definition}[\vkp]
	\label{def:vanilla}
	 The  \vkp asks for 
	 \begin{inparaenum}[\bfseries (1)]
	 \item  a set of cluster centers ${\calC} \subseteq V$ of size at most $k$  and 
	 \item an assignment $\phi: V\rightarrow \calC$ of every point in $V$ to a center in ${\calC}$.
	 \end{inparaenum}
	 The objective is to minimize the $\ell_p$-norm distance, $\calL_p(\phi, \calC)=\left( \sum_{j\in V} d(j,\phi(j))^p\right)^{{1}/{p}}$.
\end{definition}
Some of the much-studied special cases are $k$-center $(p=\infty)$, $k$-median $(p=1)$, and $k$-means ($p=2$). Note that, for vanilla clustering, the assignment $\phi$ maps each point in $V$ to its closest center in ${\cC}$ and hence fully determined by ${\cC}$. We next define the individually fair clustering problem. Let $\fairness:V\times V \rightarrow \Real_{\geq 0}$ be a non-negative  fair similarity measure defined over all pair of points in $V$. Note that $\fairness$ may not be a metric. 
 
\begin{definition}[\kfp]
\label{def:individual_fairness}
Assume we are given a function $f$ as in Definition~\ref{def:f-div}. Then, \kfp asks for
\begin{inparaenum}[\bfseries (1)]
\item a set of cluster centers ${\calC} \subseteq V$ of size at most $k$ and
\item  a distribution $\mu_j$ over $\calC$ for each point $j \in V$,
\end{inparaenum}
such that 
\begin{align}
\label{eq:ind-fair}
D_f(\mu_{j_1} || \mu_{j_2}) \leq \fairness(j_1, j_2), \forall j_1,j_2\in V
\end{align}
 The objective is to minimize 
 $   \calL_p(\mu, \calC) := \left( \sum_{j\in V} \Ex_{c{\sim}\mu_j}(d(j,c)^p) \right)^\frac{1}{p}$.
\end{definition}
The definition of individually fair $k$-center is not precisely captured by the above definition. We treat that separately in Appendix~\ref{appendix:3}. We denote the optimal cost of any instance $\calI$ of \vkp as $\vopt(\calI)$ and  that of any instance $\calJ$ of \kfp as $\iopt(\calJ)$. 

We now define a problem that ensures both statistical and individual fairness. Note that, in this definition,
we only enforce individual fairness among individuals that belong to the same protected group (see~\Cref{appendix:IF_SF}).
\begin{definition}[\total]
\label{def:stat_indv_fairness}
Assume we are give an instance of the \kfp problem. Additionally, we are given $\ell$-many 
(possibly overlapping) protected groups $G_1,G_2,\ldots,G_{\ell}$
and for each such group we are given two input {\em group} fairness parameters $\alpha_i$ and $\beta_i$.
The goal and the objective remain the same. 
The output distributions $\mu_j, \forall j\in V$ must satisfy the following two constraints.
\begin{enumerate}
    \item For each cluster, the expected fraction of the points from group $G_i$ lies between $\beta_i$ and $\alpha_i$,
    \item $D_f(\mu_{j_1} || \mu_{j_2}) \leq \fairness(j_1,j_2)$ for each pair of points $j_1,j_2 \in G_p$, for all $p\in [\ell]$.
\end{enumerate}
\end{definition}

We remark here that there exists a trivial and potentially very expensive feasible solution to both
the individual and combined fair clustering problems --- simply assign a uniform 
distribution to each point (for the combined fair clustering, this assumes that the instance is feasible with respect to group fairness parameters $\alpha$ and $\beta$). See~\Cref{appendix:IF_SF} for a discussion on the feasibility question.

\section{Algorithm for Individually Fair Clustering }
\label{sec:alg-if}
In this section, we present our main theoretical result. 
We give an algorithmic framework for solving the individually fair clustering problem (\Cref{alg:kfp}).~\Cref{thm:individual_fairness}
captures its theoretical guarantees. 

\begin{algorithm}[!ht]
    \caption{{\bfseries \ALG$(\calI)$} --- Algorithm for \kfp}
    \label{alg:kfp}
    \begin{algorithmic}[1]
        \STATE Run a $\rho$-approximation algorithm for \vkp on $\calI$ --- let $\calC$ be the set of centers.
        \STATE Solve the \fairopt~problem on instance $\calJ = (V,\calC,f,\fairness)$ --- let ${\mu}$ be the solution. 
        \STATE  return $(\calC,{\mu})$
    \end{algorithmic}
\end{algorithm}
Suppose we are given an instance $\calI=(V,d,f,\fairness)$ for \kfp. 
We first disregard $f$ and $\fairness$, and use any existing algorithm for the \vkp problem to obtain a set of cluster centers $\calC$.
We then create a constrained optimization problem \fairopt on the instance $\calJ = (V, \calC,f,\fairness)$, 
as given in~\Cref{eqn:FairLP,eqn:sum,eq:ind-fair-LP,eqn:fraction}, and solve it. 
We combine the solution of both the steps and return it as
our final output.
\begin{align}
\label{eqn:FairLP}
    \fairopt(\calJ):&\min \sum_{j\in V}\sum_{c\in \calC} x_{cj} d(c,j)^p \\ 
    \text{s.t.}~ & \sum_{c\in \calC} x_{cj} = 1 ~~\forall j \in V \, \label{eqn:sum}\\
    & D_{f}(\vec{x}_{j_1} || \vec{x}_{j_2}) \leq \fairness(j_1,j_2) ~~\forall j_1,j_2 \in V 
    \label{eq:ind-fair-LP}\\
    & 0\leq  x_{cj} \leq 1 \label{eqn:fraction}
\end{align}

We now discuss the \fairopt~problem. For each $j\in V$ and $c\in \calC$, let $x_{cj}$ be the probability
that the client $j$ is assigned to the center $c$. Hence, $\Vec{x_j}$ will give the desired 
distribution $\mu_j$ corresponding to $j$ over the set of centers $\calC$. 
The first constraint ensures that each client is assigned a distribution and the second
one enforces the individual fairness constraints~\eqref{eq:ind-fair}. 
Clearly, any solution to \fairopt~ is also a feasible solution to \kfp.

Note that the computational complexity of solving the above constrained optimization depends on the constraints~\eqref{eq:ind-fair-LP}. For example, if the LHS of these constraints are convex functions of $x$, then we can solve this in polynomial time. Indeed, that is the case for many common choices of $D_f$ ($D_{\TV}$, KL-divergence, etc.). Let $\cA_1$ be a $\rho$-approximate algorithm for 
\vkp with running time $T(\cA_1)$ and $\cA_2$ be an optimal solver for the 
\fairopt~problem with running time $T(\cA_2)$. Then, our main result is the following theorem.
\begin{theorem}
	\label{thm:individual_fairness}
	Given an instance $\calI$ to \kfp, let $(\calC,\phi)$ be a $\rho$-approximate solution of \vkp on $\calI$. 
	Then,~\Cref{alg:kfp} produces distributions $\mu_j, \forall j\in V$, such that $\calL_p(\mu, \calC) \leq 3^{(1 - \frac{1}{p})}(\rho+2)\cdot \iopt(\calI)$ and it runs in time $O(T({\cA_1}) + T(\cA_2) )$.
\end{theorem}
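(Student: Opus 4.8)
The plan is to bound $\calL_p(\mu,\calC)$, where $\mu$ is the output of Step~2, by exhibiting a single feasible solution to \fairopt$(\calJ)$ whose $\calL_p$-value is at most $3^{1-1/p}(\rho+2)\,\iopt(\calI)$; since $\cA_2$ solves \fairopt optimally, the returned $\mu$ can only be cheaper. The running-time claim is then immediate: Step~1 costs $T(\cA_1)$, Step~2 costs $T(\cA_2)$, and assembling the output in Step~3 is negligible, so the total is $O(T(\cA_1)+T(\cA_2))$.

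To build the feasible solution, let $(\calC^*,\mu^*)$ be an optimal solution to \kfp on $\calI$, so $\calL_p(\mu^*,\calC^*)=\iopt(\calI)$ and $D_f(\mu^*_{j_1}||\mu^*_{j_2})\le\fairness(j_1,j_2)$ for all $j_1,j_2$. I would transport $\mu^*$ from $\calC^*$ onto the approximate centers $\calC$ using the nearest-center map $g\colon\calC^*\to\calC$ with $g(c^*)\in\arg\min_{c\in\calC}d(c^*,c)$, and set $\tilde\mu_j:=g_*\mu^*_j$, i.e.\ $\tilde\mu_j(c)=\sum_{c^*\in g^{-1}(c)}\mu^*_j(c^*)$. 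Each $\tilde\mu_j$ is a distribution on $\calC$, so constraints \eqref{eqn:sum} and \eqref{eqn:fraction} hold automatically. The one non-routine point is that $\tilde\mu$ still satisfies the fairness constraints \eqref{eq:ind-fair-LP}: this is exactly the data-processing (coarsening) inequality for $f$-divergences --- merging the coordinates lying in a common fiber $g^{-1}(c)$ can only decrease $D_f$ --- which follows from convexity of $f$ by a one-line Jensen argument on each fiber (handling degenerate fibers via the conventions in Definition~\ref{def:f-div}). Hence $D_f(\tilde\mu_{j_1}||\tilde\mu_{j_2})\le D_f(\mu^*_{j_1}||\mu^*_{j_2})\le\fairness(j_1,j_2)$, so $\tilde\mu$ is feasible for \fairopt$(\calJ)$.

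It then remains to bound $\calL_p(\tilde\mu,\calC)$ by triangle inequalities. For a point $j$ and an optimal center $c^*$, since $g(c^*)$ is the center of $\calC$ closest to $c^*$ we have $d(c^*,g(c^*))\le d(c^*,\phi(j))\le d(c^*,j)+d(j,\phi(j))$, and therefore $d(j,g(c^*))\le d(j,c^*)+d(c^*,g(c^*))\le 2\,d(j,c^*)+d(j,\phi(j))$. Raising to the $p$-th power, applying $(a_1+a_2+a_3)^p\le 3^{p-1}(a_1^p+a_2^p+a_3^p)$ for $p\ge1$, taking $\Ex_{c^*\sim\mu^*_j}$ and summing over $j$ gives
\begin{align*}
\calL_p(\tilde\mu,\calC)^p \;\le\; 3^{p-1}\Big(2\sum_{j\in V}\Ex_{c^*{\sim}\mu^*_j}\!\big[d(j,c^*)^p\big]+\sum_{j\in V}d(j,\phi(j))^p\Big)\;=\;3^{p-1}\big(2\,\iopt(\calI)^p+\calL_p(\phi,\calC)^p\big).
\end{align*}
Finally I would use $\calL_p(\phi,\calC)\le\rho\,\vopt(\calI)\le\rho\,\iopt(\calI)$ --- the second inequality because assigning each point of the optimal fair solution to its nearest center in $\calC^*$ is a vanilla solution of cost at most $\iopt(\calI)$ --- together with the elementary bound $2+\rho^p\le(\rho+2)^p$ (valid for $p\ge1$), to conclude $\calL_p(\tilde\mu,\calC)\le 3^{1-1/p}(\rho+2)\,\iopt(\calI)$. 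Since $\calL_p(\mu,\calC)\le\calL_p(\tilde\mu,\calC)$ by optimality of $\cA_2$, the theorem follows.

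The main obstacle is the feasibility check in the second paragraph: confirming that pushing $\mu^*$ forward along the nearest-center map does not violate any $f$-divergence constraint, i.e.\ isolating and invoking the data-processing property and checking that the paper's hypotheses on $f$ suffice. Everything else --- the metric triangle inequalities, the power-mean step, and bookkeeping the approximation factor --- is routine.
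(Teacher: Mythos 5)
Your proposal is correct and follows essentially the same route as the paper: push the optimal fair distributions forward along the nearest-center map onto $\calC$, verify feasibility of the $f$-divergence constraints via Jensen's inequality on the fibers (the data-processing step, which is exactly the paper's Lemma~\ref{lem:feasibility}), and bound the cost via the triangle inequality and the power-mean estimate of Claim~\ref{clm:distance}, finishing with $\vopt(\calI)\le\iopt(\calI)$ and $2+\rho^p\le(\rho+2)^p$. No substantive differences to report.
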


In the remainder of this section, we prove Theorem~\ref{thm:individual_fairness}. We state and use several lemmas in this section whose proofs we defer to the Appendix~\ref{appendix:3}. 
We emphasize that the cost guarantee of our
algorithm is with respect to $\iopt(\calI)$  and {\em not}
with respect to $\vopt(\calI)$. It is indeed possible that 
$\iopt(\calI)$ is much larger than $\vopt(\calI)$, and hence
the clustering cost of our algorithm could be much larger compared 
to $\vopt(\calI)$. The cost of achieving fairness depends on the
fairness measure $\fairness$ and we discuss it in the experiment section (\Cref{sec:exp}).

Assume $(\calC^{\star},x^{\star})$ is an optimal solution to instance $\calI$ of \kfp
and \ALG$(\calI)$ returns $(C,\mu)$.
We construct a feasible solution $x$ to \fairopt($\calJ=(V,C,f,\fairness)$) using $\calC^{\star}$ and $x^{\star}$.
\ALG outputs the optimal solution to \fairopt, hence,
$\calL_p(\mu, \calC) \leq \calL_p(x, \calC)$. So, to prove the theorem, it is sufficient to 
bound $\calL_p(x, \calC)$.

Let $\phi:\calC^{\star}
\rightarrow \calC$ be a function that maps each center in $\calC^{\star}$ to its closest center in $\calC$:
$\phi(c^{\star}) = \arg\min_{c\in \calC} d(c,c^{\star})$, breaking ties arbitrarily.
Let $\phi^{-1}(c)$ denote the set of centers mapped to $c\in \calC$: $\phi^{-1}(c) = \{c^{\star}\in \calC^{\star}: \phi(c^{\star})=c\}$.
Note that $\phi^{-1}(c)$ can be empty for some $c\in \calC$.
For each $j\in V$ and each $c\in \calC$, 
set $x_{cj} = \sum_{c^{\star} \in \phi^{-1}(c) }x^{\star}_{c^{\star}j}$. In words,
for a fixed point $j\in V$ and a fixed center $c\in \calC$, we look at the centers 
in the optimal solution that are mapped to $c$ by $\phi$, and sum the 
corresponding probabilities to get $x_{cj}$.

We first claim the following structural property of the mapping $\phi$. This claim bounds the distance between a point $j \in V$ and a center $c\in {\cC}$ in terms 
of the distance between $j$ and its closest center in ${\cC}$ and the distance between
$j$ and any optimal center $c^{\star}$ that is mapped to $c$ by $\phi$.
\begin{restatable}{claim}{distanceClaim}
\label{clm:distance}
Assume $c\in \calC$ be a center such that $\phi^{-1}(c)$ is non-empty.
For a point $j\in V$, let $c_j$ be its closest center in $\calC$: $c_j = \arg \min \{d(j,c):c\in \calC\}$. Then, 
for each $c^{\star} \in \phi^{-1}(c)$ and for each $j\in V$, we have 
\begin{align*}
    d(j,c)^p \leq 3^{p-1} \left(2d(j,c^{\star})^p +  d(j,c_j)^p\right) 
\end{align*}
\end{restatable}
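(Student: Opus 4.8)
The plan is to prove \Cref{clm:distance} by a straightforward application of the triangle inequality together with a power-mean (or Jensen-type) inequality to handle the exponent $p$. The key geometric observation is that $c^{\star}$ is mapped to $c$ by $\phi$, which means $c$ is the \emph{closest} center in $\calC$ to $c^{\star}$; in particular $d(c^{\star},c) \le d(c^{\star}, c_j)$, since $c_j \in \calC$ is just one candidate. This is the only place where the definition of $\phi$ gets used, and it is what lets us replace the ``unknown'' distance $d(c^{\star},c)$ by something controlled.

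The main steps, in order, are as follows. First, apply the triangle inequality along the path $j \to c^{\star} \to c$:
\[
d(j,c) \le d(j,c^{\star}) + d(c^{\star},c).
\]
Second, bound $d(c^{\star},c)$ using the defining property of $\phi$ noted above, namely $d(c^{\star},c) \le d(c^{\star}, c_j)$, and then another triangle inequality $d(c^{\star},c_j) \le d(c^{\star},j) + d(j,c_j)$, giving
\[
d(j,c) \le 2 d(j,c^{\star}) + d(j,c_j).
\]
Third, raise both sides to the $p$-th power. Since $t \mapsto t^p$ is convex on $[0,\infty)$ for $p \ge 1$, for any nonnegative reals $a_1, a_2, a_3$ we have $(a_1+a_2+a_3)^p \le 3^{p-1}(a_1^p + a_2^p + a_3^p)$; applying this with $a_1 = a_2 = d(j,c^{\star})$ and $a_3 = d(j,c_j)$ yields exactly
\[
d(j,c)^p \le 3^{p-1}\bigl(2 d(j,c^{\star})^p + d(j,c_j)^p\bigr),
\]
which is the claim. (For $p = \infty$ the statement should be read in the appropriate limiting sense — the $k$-center case is handled separately in the appendix anyway — so we may assume $1 \le p < \infty$ here.)

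I do not expect any real obstacle here: the argument is a two-line triangle-inequality chain followed by a standard convexity bound. The only point requiring a moment's care is the second step — making sure we invoke the right property of $\phi$, i.e.\ that $\phi(c^{\star}) = c$ forces $d(c^{\star},c) \le d(c^{\star},c')$ for \emph{every} $c' \in \calC$, and in particular for $c' = c_j$. Everything else is bookkeeping, and the constant $3^{p-1}$ in the statement is precisely what the three-term power-mean inequality produces, which is a good consistency check that this is the intended proof.
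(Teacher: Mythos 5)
Your proposal is correct and matches the paper's own proof essentially step for step: the same triangle-inequality chain $d(j,c)\leq d(j,c^{\star})+d(c^{\star},c_j)\leq 2d(j,c^{\star})+d(j,c_j)$ using the defining property of $\phi$, followed by the convexity (Jensen/power-mean) bound $(a_1+a_2+a_3)^p\leq 3^{p-1}(a_1^p+a_2^p+a_3^p)$ with $a_1=a_2=d(j,c^{\star})$, $a_3=d(j,c_j)$. No gaps.
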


Using~\Cref{clm:distance}, we show that $x$ is a {\em low cost} 
solution to the \fairopt($\calJ$) problem in Lemma~\ref{lem:feasibility}.
Theorem~\ref{thm:individual_fairness} then follows immediately from Lemma~\ref{lem:feasibility}. 
\begin{restatable}{lemma}{feasibleLemma}
\label{lem:feasibility}
$x$ is a feasible solution to \fairopt($\calJ$) with cost $\calL_p(x, \calC) \leq 3^{\left( 1-\frac{1}{p} \right) } \left( \rho+2 \right)\cdot \iopt(\calI)$.
\end{restatable}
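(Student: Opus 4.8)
The plan is to establish the two assertions of Lemma~\ref{lem:feasibility} in turn: that the constructed fractional assignment $x$ is feasible for $\fairopt(\calJ)$, and that $\calL_p(x,\calC)\le 3^{1-1/p}(\rho+2)\cdot\iopt(\calI)$. Theorem~\ref{thm:individual_fairness} then follows immediately, since \ALG returns an \emph{optimal} solution $\mu$ of $\fairopt(\calJ)$, hence $\calL_p(\mu,\calC)\le\calL_p(x,\calC)$, and the running time is $O(T(\cA_1)+T(\cA_2))$ by construction of \Cref{alg:kfp}.

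For feasibility, the box constraint $0\le x_{cj}\le 1$ and the normalization $\sum_{c\in\calC}x_{cj}=1$ are immediate from $x_{cj}=\sum_{c^\star\in\phi^{-1}(c)}x^\star_{c^\star j}$, the fact that $\{\phi^{-1}(c)\}_{c\in\calC}$ partitions $\calC^\star$, and $\sum_{c^\star\in\calC^\star}x^\star_{c^\star j}=1$. The substantive constraint is~\eqref{eq:ind-fair-LP}. The key observation is that, for every $j$, the distribution $\vec x_j$ is exactly the pushforward of $\vec x^\star_j$ under the deterministic map $\phi:\calC^\star\to\calC$ — we merely merge the probability mass of all optimal centers that $\phi$ collapses onto a common $c\in\calC$. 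I would therefore invoke the data-processing inequality (monotonicity under coarsening) for $f$-divergences, which I expect is one of the lemmas deferred to Appendix~\ref{appendix:3}, to get $D_f(\vec x_{j_1}\|\vec x_{j_2})\le D_f(\vec x^\star_{j_1}\|\vec x^\star_{j_2})\le\fairness(j_1,j_2)$, the last step holding because $(\calC^\star,x^\star)$ is feasible for \kfp. This monotonicity of $D_f$ under post-processing is the one genuinely non-elementary ingredient, so I regard it as the heart of the argument; everything else is bookkeeping.

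For the cost bound, expand $\calL_p(x,\calC)^p=\sum_{j\in V}\sum_{c\in\calC}\sum_{c^\star\in\phi^{-1}(c)}x^\star_{c^\star j}\,d(c,j)^p$ and bound each summand via Claim~\ref{clm:distance} by $d(c,j)^p\le 3^{p-1}\bigl(2\,d(j,c^\star)^p+d(j,c_j)^p\bigr)$, where $c_j$ is the closest center to $j$ in $\calC$. The sum then splits in two. Re-indexing $\sum_{c\in\calC}\sum_{c^\star\in\phi^{-1}(c)}=\sum_{c^\star\in\calC^\star}$, the first piece equals $2\sum_{j}\sum_{c^\star\in\calC^\star}x^\star_{c^\star j}\,d(j,c^\star)^p=2\,\calL_p(x^\star,\calC^\star)^p=2\,\iopt(\calI)^p$. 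In the second piece the coefficient of $d(j,c_j)^p$ is $\sum_{c}\sum_{c^\star\in\phi^{-1}(c)}x^\star_{c^\star j}=1$ for every $j$, so it equals $\sum_j d(j,c_j)^p\le\calL_p(\phi,\calC)^p\le\rho^p\,\vopt(\calI)^p$, using that $(\calC,\phi)$ is a $\rho$-approximate vanilla solution and that reassigning to nearest centers only decreases cost. Finally, $\vopt(\calI)\le\iopt(\calI)$ because assigning each point to its nearest center in $\calC^\star$ costs $\sum_j\min_{c^\star}d(j,c^\star)^p\le\sum_j\Ex_{c\sim\mu^\star_j}d(j,c)^p=\iopt(\calI)^p$.

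Combining the two pieces gives $\calL_p(x,\calC)^p\le 3^{p-1}\bigl(2+\rho^p\bigr)\iopt(\calI)^p$. Taking $p$-th roots and using subadditivity of $t\mapsto t^{1/p}$ for $p\ge 1$, i.e. $(2+\rho^p)^{1/p}\le 2^{1/p}+\rho\le 2+\rho$, yields $\calL_p(x,\calC)\le 3^{1-1/p}(\rho+2)\,\iopt(\calI)$, which is Lemma~\ref{lem:feasibility}. I do not anticipate any subtlety beyond correctly invoking the $f$-divergence data-processing inequality for the deterministic merging map; the cost computation is routine once Claim~\ref{clm:distance} and the inequality $\vopt(\calI)\le\iopt(\calI)$ are in hand.
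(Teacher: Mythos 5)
Your proposal is correct and follows essentially the same route as the paper: feasibility via the partition structure of $\phi^{-1}$ together with monotonicity of $D_f$ under the coarsening $\phi$ (the paper proves this data-processing step inline through Jensen's inequality applied to $f$, rather than citing it), and the cost bound via Claim~\ref{clm:distance}, re-indexing the double sum over $\calC^{\star}$, and $\vopt(\calI)\le\iopt(\calI)$. Your closing arithmetic $(2+\rho^p)^{1/p}\le 2^{1/p}+\rho\le 2+\rho$ is just an equivalent rearrangement of the paper's bound $2+\rho^p\le(\rho+2)^p$, so nothing substantive differs.
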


\begin{remark}
\label{remark:k-center}
The individually fair $k$-center ($p=\infty$) problem is not handled directly by~\Cref{alg:kfp}.
In particular, stating the \fairopt~optimization problem (\Cref{eqn:FairLP}) with $p=\infty$ requires the standard technique of ``guess the optimal value''. See Appendix~\ref{appendix:3} for details.
\end{remark}





\section{Individual Fairness and Group Fairness}
\label{sec:IF_SF}
In this section, we consider the \total problem.
At a high level, our algorithmic strategy
remains the same --- we first solve the \vkp to find the cluster centers,
and then solve a suitable constrained optimization program to find the distribution corresponding to each point. We describe in~\Cref{appendix:IF_SF} the constrained optimization problem analogous to the \fairopt~problem given in ~\Cref{sec:alg-if}.
Reusing notation, assume $\iopt(\calI)$ denote
the optimal cost of the instance $\calI$. We then prove the following theorem in~\Cref{appendix:IF_SF}.
\begin{restatable}{theorem}{IFSF}
	\label{thm:combined_fairness}
	Given an instance $\calI$ to \total, let $\calC$ be a $\rho$-approximate solution for the corresponding \vkp on $\calI$. Then, there exists an algorithm which produces feasible distributions $\mu_j, \forall j\in V$, such that $\calL_p(\mu, \calC) \leq 3^{1-\frac{1}{p}}(\rho+2)\cdot \iopt(\calI)$.
\end{restatable}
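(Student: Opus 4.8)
The plan is to run the same template that proves \Cref{thm:individual_fairness}, only replacing the program \fairopt by its combined analogue \totalassgn described in \Cref{appendix:IF_SF}: the program with the same linear assignment constraints $\sum_{c\in\calC}x_{cj}=1$, the within-group $f$-divergence constraints $D_f(\vec{x}_{j_1}\|\vec{x}_{j_2})\le\fairness(j_1,j_2)$ for $j_1,j_2$ in a common group, and the linear group-fairness constraints forcing the expected fraction of $G_i$ in each cluster into $[\beta_i,\alpha_i]$. As in \Cref{sec:alg-if}, whenever the $f$-divergence appears with a convex LHS (e.g. $D_{\TV}$, KL), this is a convex program, hence solvable to optimality (or arbitrary accuracy). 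The algorithm is: run the $\rho$-approximation for \vkp on $\calI$ to get centers $\calC$; solve \totalassgn on $\calJ=(V,\calC,f,\fairness,\{G_i,\alpha_i,\beta_i\})$ to get $\mu$; output $(\calC,\mu)$. Since the solver returns an optimum of \totalassgn$(\calJ)$, it suffices to exhibit a feasible $x$ with $\calL_p(x,\calC)\le 3^{1-1/p}(\rho+2)\iopt(\calI)$, because then $\calL_p(\mu,\calC)\le\calL_p(x,\calC)$; this also shows \totalassgn$(\calJ)$ is feasible.

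To build $x$, let $(\calC^{\star},x^{\star})$ be an optimal solution of $\calI$, let $\phi:\calC^{\star}\to\calC$ send each optimal center to its nearest center in $\calC$ with $\phi^{-1}(c)=\{c^{\star}:\phi(c^{\star})=c\}$, and set $x_{cj}=\sum_{c^{\star}\in\phi^{-1}(c)}x^{\star}_{c^{\star}j}$ --- exactly the construction in the proof of \Cref{thm:individual_fairness}; equivalently $\vec{x}_j$ is the pushforward of $\vec{x}^{\star}_j$ along the deterministic map $\phi$. Feasibility has three parts. The assignment constraint holds since $\{\phi^{-1}(c)\}_{c\in\calC}$ partitions $\calC^{\star}$. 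The within-group individual-fairness constraints hold because $\vec{x}_j$ is obtained from $\vec{x}^{\star}_j$ by a fixed, data-independent coarsening of the support, so the data-processing inequality for $f$-divergences --- the same fact used to prove \Cref{lem:feasibility} --- gives $D_f(\vec{x}_{j_1}\|\vec{x}_{j_2})\le D_f(\vec{x}^{\star}_{j_1}\|\vec{x}^{\star}_{j_2})\le\fairness(j_1,j_2)$ for $j_1,j_2$ in the same $G_i$. For group fairness, write $n_i(c^{\star})=\sum_{j\in G_i}x^{\star}_{c^{\star}j}$ and $n(c^{\star})=\sum_{j\in V}x^{\star}_{c^{\star}j}$; group fairness of $x^{\star}$ on the clusters of $\calC^{\star}$ means $\beta_i n(c^{\star})\le n_i(c^{\star})\le\alpha_i n(c^{\star})$, and summing over $c^{\star}\in\phi^{-1}(c)$ gives $\beta_i n(c)\le n_i(c)\le\alpha_i n(c)$ where $n_i(c)=\sum_{j\in G_i}x_{cj}$, $n(c)=\sum_{j\in V}x_{cj}$ (equivalently, $n_i(c)/n(c)$ is a convex combination of the ratios $n_i(c^{\star})/n(c^{\star})$; clusters with $n(c)=0$ impose no constraint). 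Hence $x$ is feasible for \totalassgn$(\calJ)$.

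The cost bound is the verbatim repeat of the individually-fair case. Expanding $\calL_p(x,\calC)^p=\sum_{j\in V}\sum_{c^{\star}\in\calC^{\star}}x^{\star}_{c^{\star}j}\,d(j,\phi(c^{\star}))^p$ and applying \Cref{clm:distance},
\[
\calL_p(x,\calC)^p\le 3^{p-1}\sum_{j\in V}\sum_{c^{\star}\in\calC^{\star}}x^{\star}_{c^{\star}j}\bigl(2\,d(j,c^{\star})^p+d(j,c_j)^p\bigr)=3^{p-1}\Bigl(2\,\iopt(\calI)^p+\sum_{j\in V}d(j,c_j)^p\Bigr),
\]
using $\sum_{c^{\star}}x^{\star}_{c^{\star}j}=1$ and $\calL_p(x^{\star},\calC^{\star})=\iopt(\calI)$. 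The term $\sum_j d(j,c_j)^p$ is the vanilla clustering cost of $\calC$, at most $\bigl(\rho\cdot\vopt(\calI)\bigr)^p$, and $\vopt(\calI)\le\iopt(\calI)$ since assigning each $j$ to its nearest center in $\calC^{\star}$ yields a vanilla solution of cost at most $\iopt(\calI)$. Therefore $\calL_p(x,\calC)^p\le 3^{p-1}\bigl(2+\rho^p\bigr)\iopt(\calI)^p\le 3^{p-1}(\rho+2)^p\iopt(\calI)^p$ (using $(\rho+2)^p\ge\rho^p+2^p\ge\rho^p+2$ for $p\ge 1$, $\rho\ge1$), i.e. $\calL_p(x,\calC)\le 3^{1-1/p}(\rho+2)\iopt(\calI)$, which completes the proof. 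For $p=\infty$ the same argument goes through after the usual ``guess $\opt$'' step, exactly as in \Cref{remark:k-center}.

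The main obstacle is the feasibility verification for \totalassgn under the merge map $\phi$: one has to check simultaneously that the $f$-divergence bounds survive the pushforward (data-processing inequality) and that the group-fairness ratios survive merging (the convex-combination/mediant step, with care for zero-mass clusters $c^{\star}$), and to confirm that the combined program as written in \Cref{appendix:IF_SF} --- which also incorporates the \cite{bera19} group-fairness machinery --- is a convex (hence efficiently solvable) program so that the stated algorithm is meaningful. The cost analysis itself carries over with no new ideas from \Cref{thm:individual_fairness}.
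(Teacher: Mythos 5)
Your proposal is correct and follows essentially the same route as the paper: the same two-step algorithm, the same pushforward construction $x_{cj}=\sum_{c^{\star}\in\phi^{-1}(c)}x^{\star}_{c^{\star}j}$, feasibility via the Jensen/data-processing argument for the within-group $f$-divergence constraints and summation (sub-additivity) of the group-fairness inequalities over $\phi^{-1}(c)$, and the cost bound via \Cref{clm:distance} together with $\vopt(\calI)\le\iopt(\calI)$, exactly as in \Cref{lem:total:feasibility} and \Cref{lem:total:cost}. Your explicit handling of $(2+\rho^p)\le(\rho+2)^p$ and of zero-mass merged clusters just fills in details the paper leaves implicit.
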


Note that, the case of $p=\infty$ ($k$-center) requires special case (see~\Cref{remark:k-center}) --- we handle this in~\Cref{appendix:IF_SF}.
Finally, we consider the special case of $\fairness=d$, that is when the fairness 
similarity measure is given by the underlying distance metric, and observe the conditions under which individually fair clustering solutions guarantees group fairness. Our characterization is similar to the one discussed in the work of~\cite{dwork2012fairness} and given in~\Cref{appendix:relation}.

\section{Experimental Evaluation}
\label{sec:exp}

In this section, we present extensive empirical evaluations of our algorithms. We implement our algorithms in Python 3.6 and simulate on Intel Xeon CPU E5-2670 v2 @ 2.50GHz 20 cores and 96 GB 1333 MHz DDR3 memory. We use IBM CPLEX for solving linear programs. \footnote{\href{https://github.com/nihesh/distributional\_individual\_fairness\_in\_clustering}{https://github.com/nihesh/distributional\_individual\_fairness\_in\_clustering}}

Although our algorithmic framework can handle any $\ell_p$-norm based objective, we focus on the widely popular $k$-means clustering for demonstration. We measure individual 
fairness against {\em total variation norm}, $D_{\TV}(\mu_x || \mu_y) = \frac{1}{2}\sum_{c\in \cC}|\mu_x(c) - \mu_{y}(c)|$, a widely used $f$-divergence measure.
Based on our experiments, we report the following key findings. 
\begin{inparaenum}[\bfseries (1)]
    \item Variants of $k$-means and other clustering algorithms that guarantee group fairness are largely unfair to individuals.
    \item Our algorithms provide individual fairness by paying at most $1.08$ times more than the optimal cost.
    \item Unlike group fairness, individual fairness comes at a higher cost when compared against vanilla $k$-means.
\end{inparaenum}


\noindent
{\bf Datasets.} We use five datasets from UCI Machine Learning Repository \cite{uci-repo}. 
\footnote{\href{https://archive.ics.uci.edu/ml/datasets/}{https://archive.ics.uci.edu/ml/datasets/}}
\begin{inparaenum}[\bfseries (1)]
\item Bank - 4,521 points \cite{bank-dataset}
\item Adult - 32,561 points \cite{adult-dataset}
\item Creditcard - 30,000 points \cite{creditcard-dataset} 
\item Census1990 - 2,458,285 points \cite{census1990-dataset}
\item Diabetes - 101,766 points \cite{diabetes-dataset}. 
\end{inparaenum} We remark that most of the previous works on fairness in clustering~\cite{CKLV18,bera19,Backurs2019,huang2019coresets} 
focused on these datasets.

\noindent
{\bf Algorithms.} We use Lloyd's algorithm \cite{scikit-learn} to solve vanilla $k$-means and approximate the centers by its nearest neighbour in $V$. $\hkm$ denotes hard $k$-means (binary assignment of points to centers) and $\skm$ denotes soft $k$-means~\cite{skm-lecture-notes, DudaHS01}. $\skm$ outputs a set of $k$ centers $\{c_1, c_2, \cdots c_k\}$ and for a fixed stiffness parameter $\beta$, assigns $x\in V$ to a center $c$ with probability $\frac{ e^{-\beta d(c,x)^ 2} } { \sum_{\ell = 1}^{k} e^{-\beta d(c_{\ell},x)^ 2} }$. $\algif$ denotes the algorithm for individual fairness from Section~\ref{sec:alg-if} and $\algtf$ denotes the algorithm for combined fairness from Section~\ref{sec:IF_SF}. 
$\alggf$ denotes the algorithm for group fairness from~\cite{bera19}.
$\optif$ and $\opttf$ denote the optimal solution to the natural LP relaxation (allowing the fractional opening of centers) for \kfp and \total respectively. They provide lower bounds to the cost of the optimal solution of the corresponding problems.

\noindent
{\bf Fairness Similarity Measures.} 
We consider two different fairness similarity measures $\fairness_1$ and $\fairness_2$. Both the measures are defined using the underlying distance metric $d$ in the given feature space. We choose $\fairness_1 = d$, scaled linearly so that $\fairness_1(j_1, j_2) \in [0, 1]$ $\forall j_1, j_2 \in V$.  
In order to lower the computational requirement, we enforce $\fairness_1$ constraints only between every $i\in V$ and its $m$ nearest neighbors. $\fairness_2$ is defined in a more local way. For each $i\in V$, we consider the smallest ball $B_i$ of radius $r_i$ centered at $i$, such that $B_i$ contains at least $\floor{{|V|}/{k}}$ points. Then, we define $\fairness_2(i,j) = {d(i,j)}/{r_i}, \forall j\in B_i$ and $\fairness_2(i,j) = 1$, otherwise. The motivation behind $\fairness_2$ is 
inspired by the individual fairness notion in~\cite{jung_center,mahabadi2020individual}. More specifically, in $\fairness_2$, each point is required to be treated similarly to its closest $\floor{{|V|}/{k}}$ neighbours. For \tf, we enforce $\fairness_1$ and $\fairness_2$ only within protected groups. 

{\bf Implementation Details.}
We subsample the datasets to 1000 points selected uniformly at random and run the experiments on a subset of numerical attributes. 
The numerical attributes are normalized to zero mean and unit variance. We choose two protected attributes for each dataset, set $\delta = 0.2$ (measure of tightness of group fairness constraints, introduced in~\cite{bera19}) and set $m = 250$. We run the algorithms for $k = 2, 4, 6, 8, 10$. This configuration of parameters is used in all the simulations unless mentioned otherwise.

Due to space constraints, we present a subset of our results here --- further results, including runtime of our algorithm on variable dataset sizes, are given in  Appendix~\ref{supp:exp}.

\begin{table}[ht]
\centering
\caption{Percentage of individual fairness constraint violations of $\skm$ when $\skm$ and $\algif$ incur the same clustering cost.}
\vspace{1mm}
\subfloat[][Fairness similarity $\calF_1$]{
     \begin{tabular}{ c c c c c }
     \label{tab:skmviolations_global}
         Clusters ($k$) & $4$ & $6$ & $8$ & $10$ \\
         \hline
         Adult   & 88 & 94 & 98 & 99 \\
         Creditcard  & 61 & 76 & 83 & 85 \\
         Census1990 & 25 & 34 & 44 & 50 \\
         \hline
    \end{tabular}
}
\qquad
\subfloat[][Fairness similarity $\calF_2$]{
    \begin{tabular}{ c c c c c }
    \label{tab:skmviolations_local}
        Clusters ($k$) & $4$ & $6$ & $8$ & $10$ \\
        \hline
        Adult  & 4 & 5 & 7 & 8 \\
        Creditcard & 6 & 5 & 6 & 6 \\
        Census1990  & 7 & 11 & 13 & 11 \\
        \hline
    \end{tabular}
}
\label{tab:skmviolations}
\end{table}

{\bf Unfairness of $\boldsymbol{\skm}$. }  In~\Cref{tab:skmviolations}, we demonstrate the unfairness of soft $k$-means. Note that the output of $\skm$ depends on the stiffness parameter $\beta$. We experimentally choose $\beta$ such that the cost of $\skm$ is equal to the cost of $\algif$.~\Cref{tab:skmviolations_global} shows the percentage of individual fairness constraints violated, with respect to $\fairness_1$ and~\Cref{tab:skmviolations_local} shows the same with $\fairness_2$. 
Observe that $\fairness_2$ is a much relaxed fairness measure compared to $\fairness_1$: for each point, similarity is measured locally, with respect to its $\floor{{|V|}/{k}}$ nearest neighbors. Even with such relaxations, $\skm$ exhibits unfair treatment of similar points. Our solution does not violate any individual fairness constraints.

\begin{figure}[ht]
    \centering
    \subfloat[][Fairness similarity $\calF_1$] {
        \includegraphics[width=0.45\linewidth]{./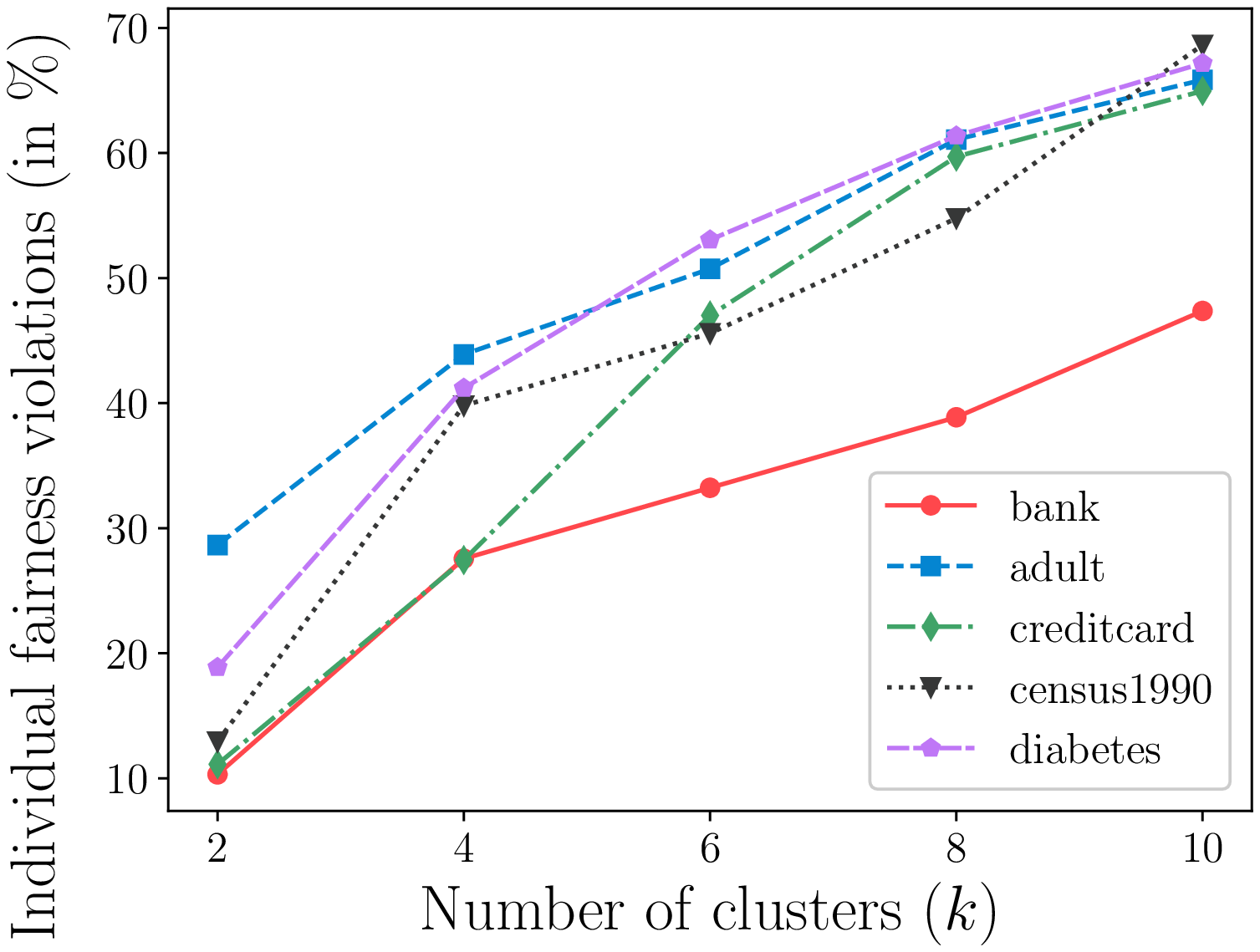}
        \label{fig:sf_global}
    }
    \qquad
    \subfloat[][Fairness similarity $\calF_2$] {
        \includegraphics[width=0.45\linewidth]{./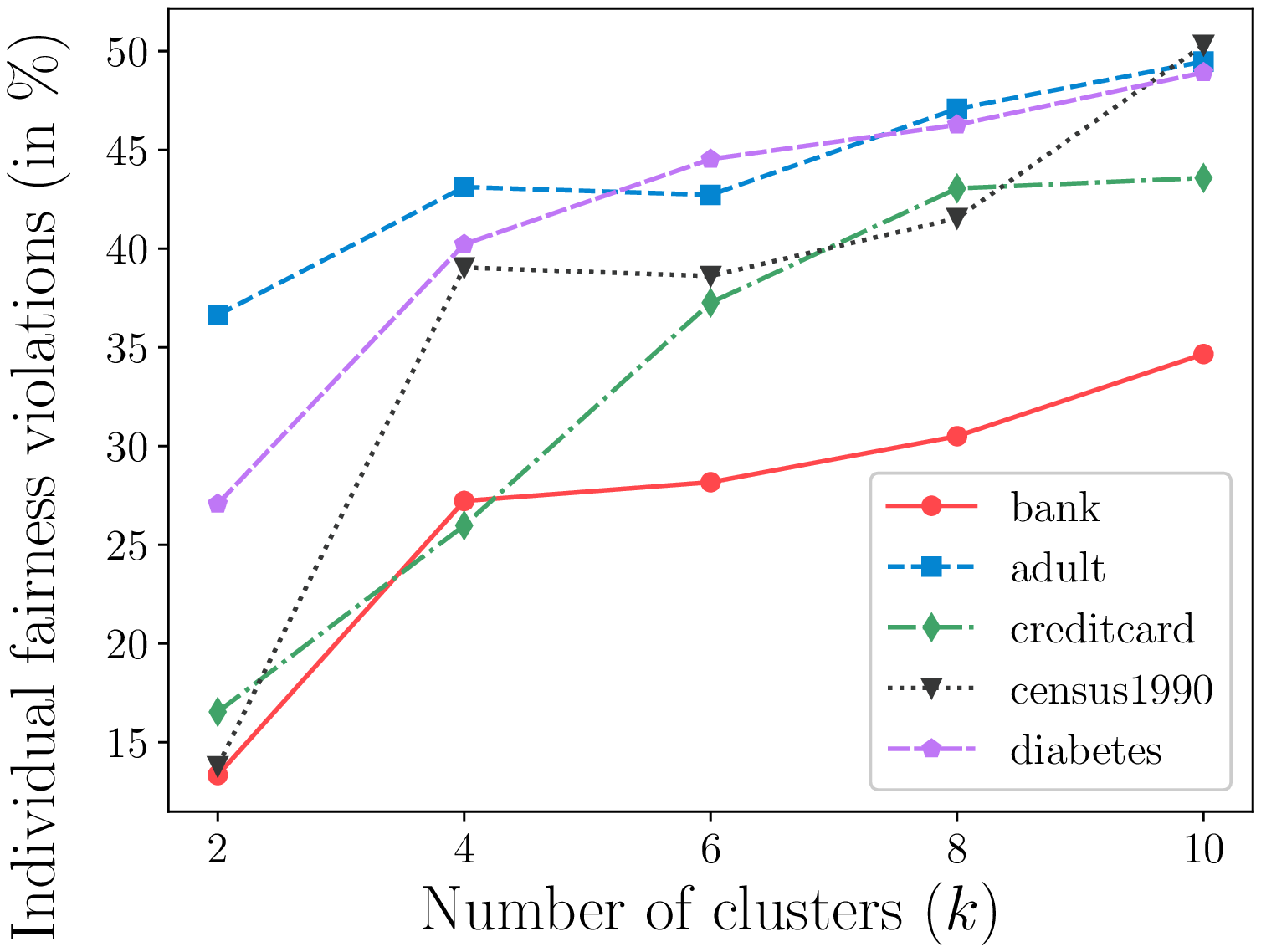}
        \label{fig:sf_local}
    }
    \caption{Percentage of individual fairness constraint violations of $\alggf$ vs number of clusters}
    \label{fig:sf_unfairness}
\end{figure}

{\bf Unfairness of $\boldsymbol{\alggf}$.} In~\Cref{fig:sf_unfairness}, we show that group fairness does not imply individual fairness. We observe the percentage of individual fairness constraints violated by $\alggf$ for different values of k and infer that, for $k \geq 4$, at least $25\%$ of the constraints are violated in the best case, and violations increase monotonically as k increases (as expected). 

\noindent
{\bf Cost Analysis of Our Algorithms.} In this section, we compare the cost of $\algif$ and $\algtf$ against $\optif$ and $\opttf$, respectively. Since $\optif$ and $\opttf$ are computationally expensive, we reduce the size of the dataset to $80$ points chosen uniformly at random, and set $m = 20$. We present the plots for two datasets here, and 
the rest are in Appendix~\ref{supp:exp} (similar trend).

\begin{figure}[ht]
    \centering
    \subfloat[][Fairness similarity $\calF_1$] {
        \includegraphics[width=0.45\linewidth]{./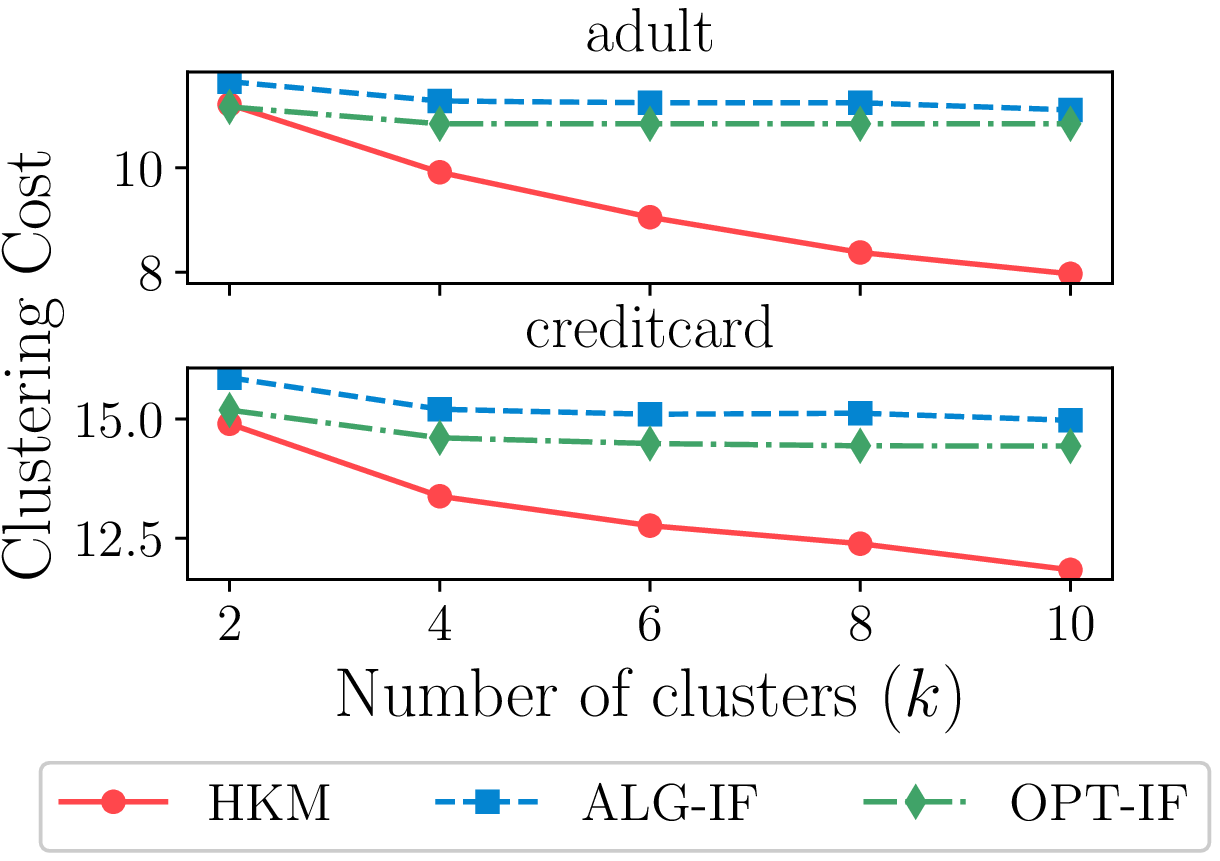}
        \label{fig:if_alg_opt_global}
    }
    \qquad
    \subfloat[][Fairness similarity $\calF_2$] {
        \includegraphics[width=0.45\linewidth]{./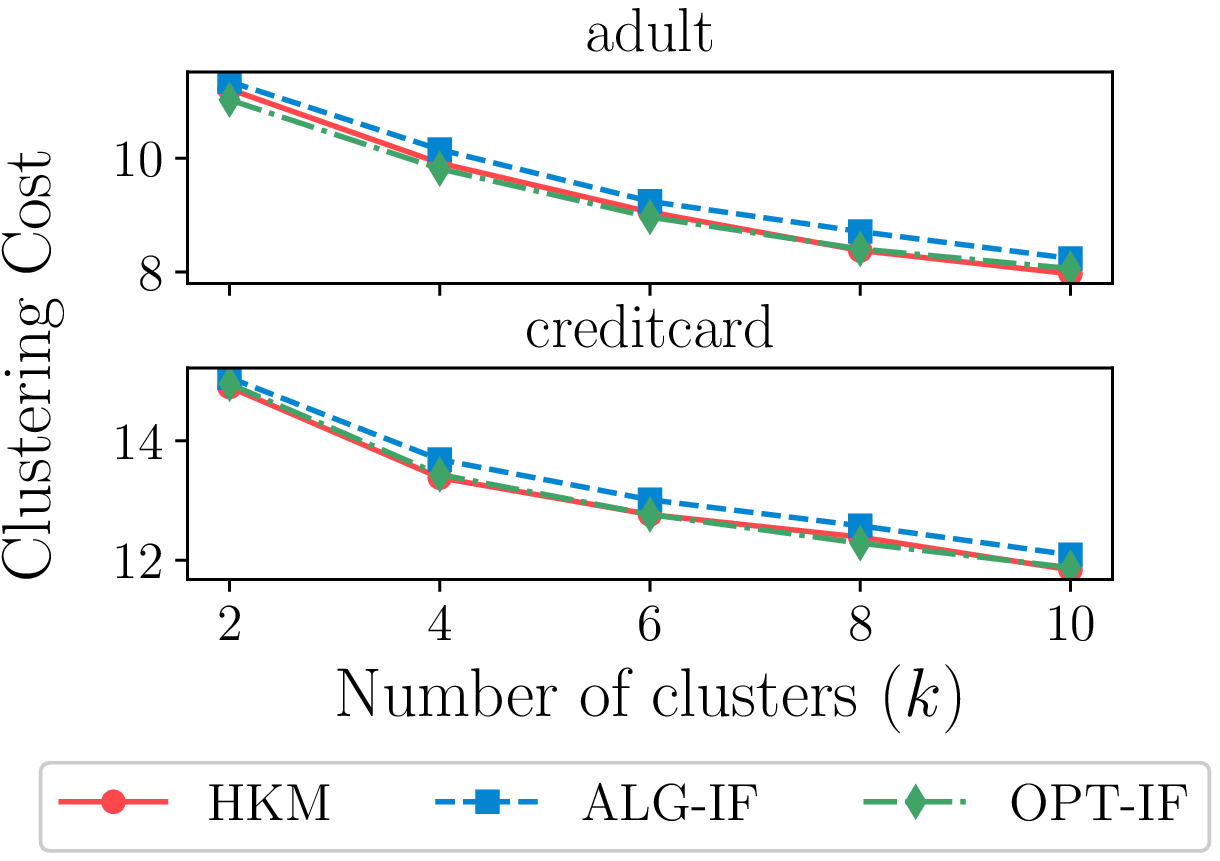}
        \label{fig:if_alg_opt_local}
    }
    \caption{Clustering cost vs number of clusters for $\algif$, $\optif$ and $\hkm$.}
    \label{fig:if_alg_opt}
\end{figure}

In~\Cref{fig:if_alg_opt}, we compare the cost of $\algif$ and $\optif$ using fairness similarity $\fairness_1$ and $\fairness_2$. We observe that the approximation ratio is at most $1.08$, which is significantly better than the bound given in~\Cref{thm:individual_fairness}. 

\begin{figure}[ht]
    \centering
    \subfloat[][Fairness similarity $\calF_1$] {
        \includegraphics[width=0.45\linewidth]{./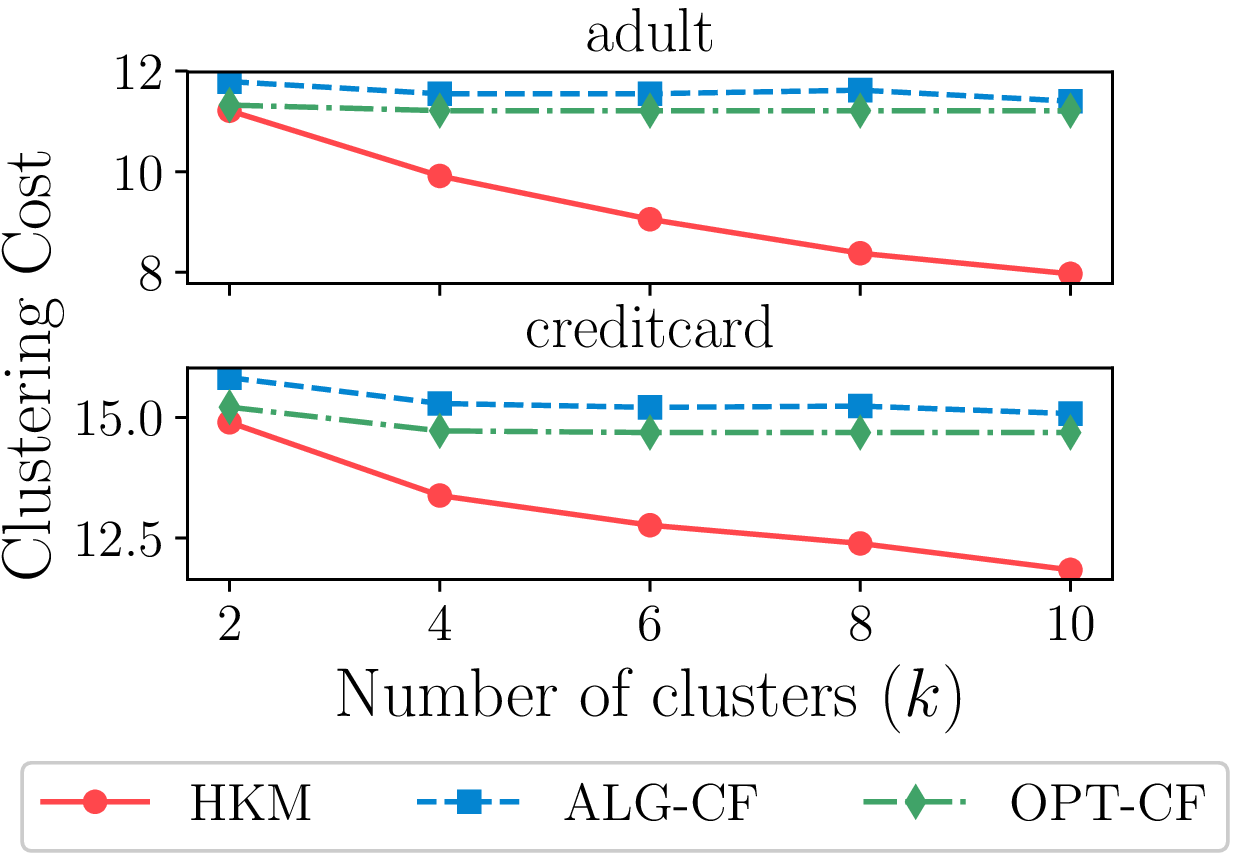}
        \label{fig:cf_alg_opt_global}
    }
    \qquad
    \subfloat[][Fairness similarity $\calF_2$] {
        \includegraphics[width=0.45\linewidth]{./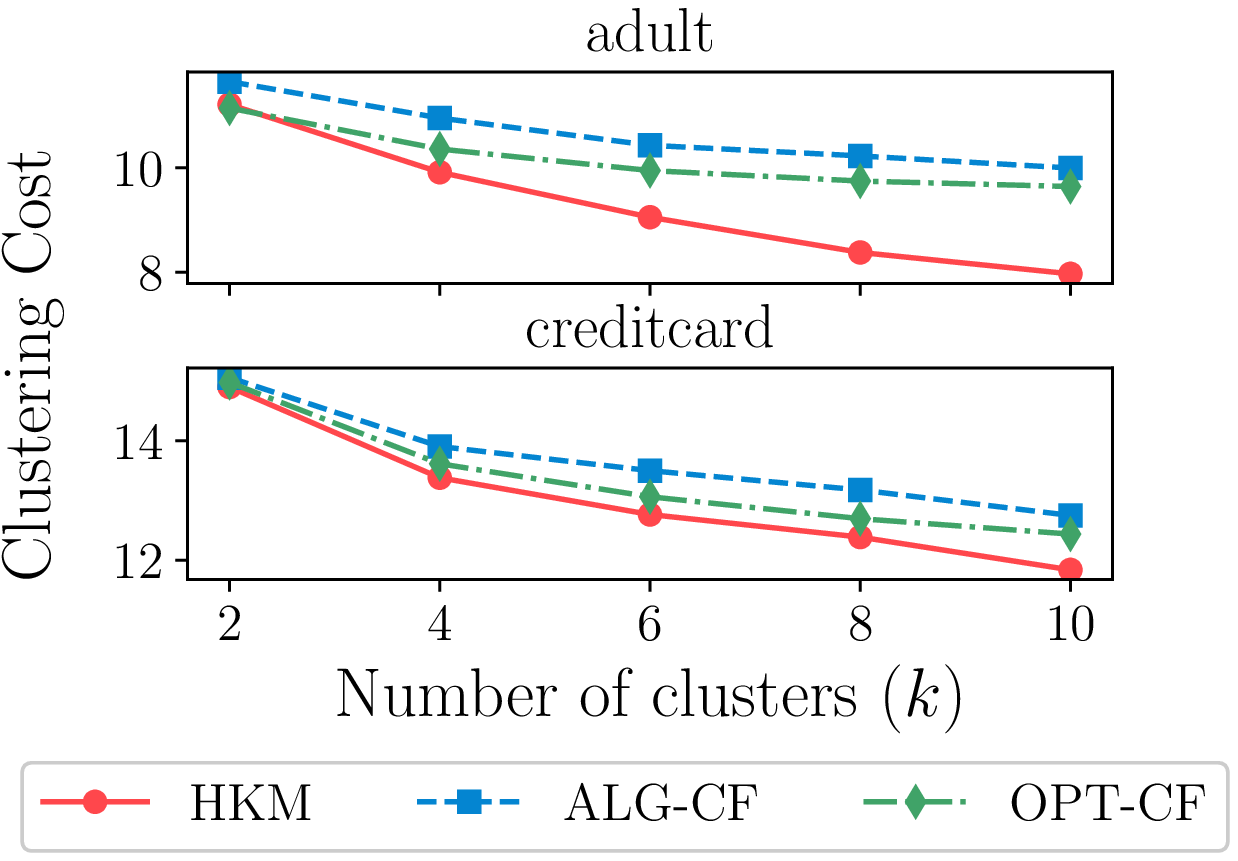}
        \label{fig:cf_alg_opt_local}
    }
    \caption{Clustering cost vs number of clusters for $\algtf$, $\opttf$ and $\hkm$.}
    \label{fig:cf_alg_opt}
\end{figure}

In~\Cref{fig:cf_alg_opt}, we compare the cost of $\algtf$ and $\opttf$ using fairness similarity $\fairness_1$ and $\fairness_2$. Similar to $\algif$, we observe that the approximation ratio is at most $1.06$, which is significantly better than the bound given in~\Cref{thm:combined_fairness}.

\noindent
{\bf Price of Individual Fairness.}~\Cref{fig:if_alg_opt} and~\Cref{fig:cf_alg_opt} shows that the cost of $\optif$ and $\opttf$ can be at most $1.5$ times larger as compared to $\hkm$.
In contrast, ~\cite{bera19} showed that group fairness can be achieved by paying at most 1.15 times the $\hkm$ cost
(for all the datasets). It suggests that individual fairness comes at a higher price. We elaborate on this further in Appendix~\ref{appendix:3}.

\section{Conclusion}

In this work, we initiate the study of individual fairness in clustering, inspired by the notion of Dwork~\etal~\cite{dwork2012fairness} in the context of classification. We discuss and demonstrate the limitations of group fairness alone.
We give a general framework for handling individual fairness and combined fairness
for a variety of clustering objectives as well as statistical distance measures. Empirically, we demonstrate the effectiveness of our approach. One caveat of our generic framework is that we rely on an efficient solver for a convex optimization problem. We leave the problem 
of designing more efficient and scalable algorithms for specific instances of 
$f$-divergence as an interesting future research direction.
\newpage
\bibliographystyle{plain}
\bibliography{refs}

\appendix

\section{Missing Details from~\Cref{sec:alg-if}}
\label{appendix:3}
In this section, we fill out the details of various items
that we have omitted in the main body due to lack of space. 
In~\Cref{appendix:proof}, we complete the proof of~\Cref{thm:individual_fairness}. In~\Cref{appendix:k_center}
we take up the case of $k$-center and discuss how to modify our
algorithm to get the same result as given in~\Cref{thm:individual_fairness}. Finally, in~\Cref{appendix:cost}, we discuss the {\em price} of achieving individual fairness by comparing
the cost of a fair clustering solution against the corresponding vanilla clustering solution.

\subsection{Proof of~\Cref{thm:individual_fairness}}
\label{appendix:proof}
In this section, we present the proofs of various lemmas and claims that
are used in proving ~\Cref{thm:individual_fairness}. We use Jensen's Inequality in the proof, and for the sake of completeness, we include it here.
\begin{lemma}[Jensen's inequality
~\cite{jensen1906fonctions}]
\label{lem:jensen}
Let $g$ be a real-valued convex function and $p$ be a distribution over finite discrete space $\cX$.
Then, $g (\sum_{i\in \cX} p_ix_i) \leq \sum_{i \in \cX} p_i g(x_i)$.
\end{lemma}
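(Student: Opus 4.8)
The plan is to prove \Cref{lem:jensen} by induction on the size $n := |\{i\in\cX : p_i>0\}|$ of the support of $p$, reducing the general statement to the defining two-point inequality of convexity. First I would discard every atom with $p_i=0$, which changes neither side of the claimed inequality, so that we may assume $p_i>0$ for all $i\in\cX$ and $n=|\cX|$. The base case $n=1$ is immediate, since both sides equal $g(x_1)$; the case $n=2$ is exactly the definition of convexity of $g$, namely $g(p_1x_1+p_2x_2)\le p_1 g(x_1)+p_2 g(x_2)$ whenever $p_1,p_2\ge 0$ and $p_1+p_2=1$.

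For the inductive step, suppose the statement holds for every distribution supported on fewer than $n$ points, and let $p$ be supported on $\{1,\dots,n\}$ with each $p_i>0$, so that $1-p_n\in(0,1)$. Set $q_i := p_i/(1-p_n)$ for $i<n$; this is a distribution on $\{1,\dots,n-1\}$, and $\sum_{i=1}^n p_i x_i = (1-p_n)\bigl(\sum_{i=1}^{n-1} q_i x_i\bigr)+p_n x_n$. Applying the two-point case to this convex combination and then the inductive hypothesis to the first term yields
\begin{align*}
g\Bigl(\sum_{i=1}^n p_i x_i\Bigr) &\le (1-p_n)\, g\Bigl(\sum_{i=1}^{n-1} q_i x_i\Bigr) + p_n g(x_n) \\
&\le (1-p_n)\sum_{i=1}^{n-1} q_i g(x_i) + p_n g(x_n) = \sum_{i=1}^n p_i g(x_i),
\end{align*}
which closes the induction. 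An alternative one-line argument I would also mention is the supporting-line proof: since $g$ is real-valued and convex on an interval containing all the $x_i$, it has a subgradient $m$ at the mean $\bar x := \sum_i p_i x_i$, i.e. $g(y)\ge g(\bar x)+m(y-\bar x)$ for all $y$ in the domain; evaluating at $y=x_i$, multiplying by $p_i\ge 0$ and summing gives $\sum_i p_i g(x_i)\ge g(\bar x)+m\bigl(\sum_i p_i x_i-\bar x\bigr)=g(\bar x)$.

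There is no genuine obstacle here; the only points requiring a little care are (i) separating out the degenerate case $p_n=1$ in the induction so we do not divide by $1-p_n=0$ (that case is trivial), and (ii) checking that the mean and all the sample points lie in the interval on which $g$ is convex, which is automatic when $g$ is convex on all of $\Real$ or on the convex hull of $\{x_i\}$. The substantive work of this appendix section lies not in \Cref{lem:jensen} but in the subsequent proofs of \Cref{clm:distance} and \Cref{lem:feasibility}: there Jensen's inequality is invoked in the form $\bigl(\sum_i p_i a_i\bigr)^p\le\sum_i p_i a_i^p$ (valid since $t\mapsto t^p$ is convex for $p\ge 1$) to move from the $\ell_p$-norm objective to a per-term bound, which is then combined with the triangle-inequality estimate of \Cref{clm:distance} and the feasibility check of the $f$-divergence constraints for the constructed assignment $x$.
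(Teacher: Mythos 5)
Your proof is correct: the induction on the support size, bootstrapping from the two-point definition of convexity, is the standard argument, and the supporting-line alternative you sketch is also valid (with the caveat you already note about $g$ being convex on an interval containing the $x_i$ and their mean). The paper itself does not prove \Cref{lem:jensen} at all — it states it only for completeness with a citation to Jensen's original work and then applies it in \Cref{clm:distance} and \Cref{lem:feasibility} exactly in the form $\bigl(\sum_i p_i a_i\bigr)^p \le \sum_i p_i a_i^p$ that you describe — so your write-up supplies a proof where the paper offers none, and no discrepancy with the paper's argument arises.
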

We now restate the claim regarding the structural property of the mapping $\phi$ and prove it.
\distanceClaim*
\begin{proof}
We begin the proof by first considering $p=1$. In this case, $\psi=1$.
This was implicitly proved in~\cite{bera19}.
For completeness, we present a proof here as well.
The proof follows by application of triangle inequality and definition of $\phi$.
\begin{align}
    &d(j,c) \nonumber \\
    &\leq d(j,c^{\star}) + d(c^{\star},c) \,,&\text{(triangle inequality)} \nonumber \\
    & \leq d(j,c^{\star}) + d(c^{\star},c_j) \,,&\text{(since $c^{\star} \in \phi^{-1}(c)$)} \nonumber \\
    & \leq d(j,c^{\star}) + d(c^{\star},j) + d(j,c_j) \,, &\text{(triangle inequality)}\nonumber \\
    & = 2 d(j,c^{\star}) + d(j,c_j) \,. &  \label{eq:abc}   
\end{align} 
To prove for general $p$ such that $p\geq 1$, we use the convexity of the function $h(x)=x^p$.
In particular, applying Jensen inequality, we derive,
\begin{align*}
     &d(j,c) ^p \\ 
     &\leq (2 d(j,c^{\star}) + d(j,c_j))^p\,, ~~\text{(from~\Cref{eq:abc})} \\
     & \leq 3^{p-1} (2 d(j,c^{\star})^p + d(j,c_j)^p) ~~\text{(Jensen's inequality)} \,.
\end{align*}
This completes the proof of the claim.
\end{proof}

We now prove the main technical lemma (~\Cref{lem:feasibility}).
\feasibleLemma*
\begin{proof}
We first prove that $x$ is a feasible solution to \fairopt($\calJ$). 
First, we show that $\vec{x_j}$ is a 
probability distribution. Clearly, $0 \leq x_{cj} \leq 1$ for all $c$:
\begin{align*}
x_{cj} = \sum_{c^{\star} \in \phi^{-1}(c)}x^{\star}_{c^{\star}j} \leq \sum_{c^{\star}\in \calC^{\star}} x^{\star}_{c^{\star}j} \leq 1 \,.
\end{align*}
We next show that $\sum_{c \in \calC} x_{cj} = 1$ for all $j\in V$.
\begin{align*}
\sum_{c \in \calC} x_{cj} = \sum_{c \in \calC} \sum_{c^{\star} \in \phi^{-1}(c)}x^{\star}_{c^{\star}j} = \sum_{c^{\star}\in \calC^{\star}} x^{\star}_{c^{\star}j} = 1\,,
\end{align*}
where the second last equality follows since $\phi^{-1}$ forms a partition of $\calC^{\star}$ and 
the final equality follows from feasibility of $x^{\star}$.

We now show that $x$ satisfies~\eqref{eq:ind-fair-LP}. Fix two points $j_1$ and $j_2$ in $V$.
Recall the Definition~\ref{def:f-div} of $f$-divergence between $\vec{x_{j_1}}$ and $\vec{x_{j_2}}$:
\begin{align*}
&D_{f}(\vec{x_{j_1}} || \vec{x_{j_2}}) 
= \sum_{c\in \calC} x_{cj_2} f \left( \frac{x_{cj_1}}{x_{cj_2}}\right) 
= \sum_{c\in C} \left( \sum_{c^{\star} \in \phi^{-1}(c) }x^{\star}_{c^{\star}j_2}\right) f \left( \frac{\sum_{c^{\star} \in \phi^{-1}(c) }x^{\star}_{c^{\star}j_1}}{\sum_{c^{\star} \in \phi^{-1}(c) }x^{\star}_{c^{\star}j_2}}\right) \,.\\
\end{align*}
Observe that, for any center $c$ with $\phi^{-1}(c) = \emptyset$, $x_{cj} = 0$ for each $j\in V$.
We call such centers {\em empty} centers.
Hence, assuming $0f(\frac{0}{0})$ is well-defined, we can disregard any {\em empty} center $c$.
Fix a center $c\in C$ that is non-empty. For ease exposition, let $B = \sum_{c^{\star} \in \phi^{-1}(c) }x^{\star}_{c^{\star}j_2}$.
Since $f$ is convex, by applying Jensen's inequality, we derive the following:
\begin{align*}
    f\left( \frac{\sum_{c^{\star} \in \phi^{-1}(c) }x^{\star}_{c^{\star}j_1}}{B}\right) 
    \leq \sum_{c^{\star} \in \phi^{-1}(c) } \frac{x^{\star}_{c^{\star}j_2}}{B} f\left( \frac{x^{\star}_{c^{\star j_1}}}{x^{\star}_{c^{\star}j_2}}\right)
\end{align*}
Plugging this in the above equations, we derive:
\begin{align*}
    D_{f}(\vec{x_{j_1}} || \vec{x_{j_2}}) 
     &\leq \sum_{c\in C} \sum_{c^{\star} \in \phi^{-1}(c) } {x^{\star}_{c^{\star}j_2}} f\left( \frac{x^{\star}_{c^{\star j_1}}}{x^{\star}_{c^{\star}j_2}}\right) \,, \\
     &= \sum_{c^{\star} \in \calC^{\star}} {x^{\star}_{c^{\star}j_2}} f\left( \frac{x^{\star}_{c^{\star j_1}}}{x^{\star}_{c^{\star}j_2}}\right) \,, \\
    &= D_{f}(\vec{x^{\star}_{j_1}} || \vec{x^{\star}_{j_2}}) \\
     &\leq \fairness(x,y) \,,
\end{align*}
where (a) the first equality follows since $\phi^{-1}$ partitions the set $\calC^{\star}$, 
(b) the second equality follows by definition of $D_{f}$, and (c) the last inequality follows since
$x^{\star}$ is a feasible solution. This completes the proof of the lemma.

We now prove the second part of the lemma. Fix a point $j\in V$.
Let $d^{\star}(j)$ and $d(j)$ denote
the expected cost paid by the point $j$ in the optimal solution $x^{\star}$ 
and our constructed solution $x$, respectively. Formally,
\begin{align}
    d^{\star}(j) &= \sum_{c^{\star} \in \calC^{\star}} x^{\star}_{c^{\star}j} d(j,c^{\star})^p \,, \\
    d(j) &= \sum_{c\in \calC} x_{cj} d(c,j)^p \,.
\end{align}
Recall that in \vkp, $j$ is assigned to its closest cluster center in $\calC$.
Assume $c_j$ is the closest center to $j$ in $\calC$: $c_j = \arg \min \{d(j,c):c\in \calC\}$.
Then, $\left(\sum_{j\in V} d(j,c_j)^p\right)^{1/p} \leq  \rho\cdot OPT_{k,p}(\calI)$. 
Further, $\vopt(\calI) \leq \iopt(\calI)$, since any solution to \kfp is also a 
feasible solution to \vkp. 

We now bound $d(j)$ in terms of $d^{\star}(j)$ and $d(j,c_j)$. Assume $\psi = 3^{p-1}$.
\begin{align*}
d(j) &= \sum_{c\in \calC} x_{cj} \cdot d(j,c)^p \,, \\
& = \sum_{c\in \calC} \sum_{c^{\star} \in \phi^{-1}(c)} x^{\star}_{c^{\star}j} \cdot d(j,c)^p \,, \\
& \leq \psi \sum_{c\in \calC} \sum_{c^{\star} \in \phi^{-1}(c)} x^{\star}_{c^{\star}j} \cdot (2d(j,c^{\star})^p+d(j,c_j)^p) \,, ~~\text{(using~\Cref{clm:distance})} \\
&= 2\psi \sum_{c\in \calC} \sum_{c^{\star} \in \phi^{-1}(c)} x^{\star}_{c^{\star}j} \cdot d(j,c^{\star})^p 
 + \psi d(j,c_j)^p \sum_{c\in \calC} \sum_{c^{\star} \in \phi^{-1}(c)} x^{\star}_{c^{\star}j}  \,,\\
&= 2\psi \sum_{c^{\star}\in \calC^{\star}} x^{\star}_{c^{\star}j} \cdot d(j,c^{\star})^p 
+ \psi d(j,c_j)^p \sum_{c^{\star}\in \calC^{\star}}  x^{\star}_{c^{\star}j}  \,,\\
&= 2\psi d^{\star}(j) + \psi d(j,c_j)^p \,,
\end{align*}
where (1) the second last equality follows since $\phi$ is a function, and (2) the last equality
follows from the definition of $d^{\star}(j)$ and uses the fact that 
$\sum_{c^{\star}\in \calC^{\star}}  x^{\star}_{c^{\star}j}=1$.

Taking a sum over all the points in $V$, we get
\begin{align*}
    \sum_{j\in V} d(j) &\leq 2\psi \sum_{j\in V} d^{\star} (j) + \psi \sum_{j\in V} d(j,c_j)^p \\
    &\leq 3^{p-1} \left( 2\sum_{j\in V} d^{\star} (j) + \sum_{j\in V} d(j,c_j)^p \right) \\
    & \leq 3^{p-1} \left( \rho+2 \right) ^p \sum_{j\in V} d^{\star}(j)^p \\
    &= 3^{p-1}\left((\rho + 2)\iopt(\calI) \right)^p
\end{align*}
Taking the $p$-th root on both sides gives us the lemma.
\end{proof}

\subsection{Individually Fair \boldmath{$k$}-Center}
\label{appendix:k_center}
In this section, we revisit the individually fair $k$-center problem. As alluded in~\Cref{remark:k-center}, we need to be careful when 
dealing with $p=\infty$. As such, the same theorem still holds, but
the algorithmic details are slightly different. 
We first define the problem in the following way.

\begin{definition}
Assume we are given a function $f$ as in Definition~\ref{def:f-div} and a fair similarity measure $\fairness$. Then, \kcf asks for the minimum distance $R$ along with 
\begin{inparaenum}[\bfseries (1)]
\item a set of cluster centers ${\calC} \subseteq V$ of size at most $k$ and
\item  a distribution $\mu_j$ over $\calC$ for each point $j \in V$, such that any center $c\in \calC$ that lies in the support of $\mu_j$ satisfies $d(c, j) \leq R$.
\end{inparaenum}
Further, the following individual fairness constraints need to be satisfied by the output distributions. 
\begin{align}
\label{eq:ind-fair-kc}
D_f(\mu_{j_1} || \mu_{j_2}) \leq \fairness(j_1, j_2), \forall j_1,j_2\in V
\end{align}

\end{definition}

\mypar{Algorithmic Details} The algorithm follows exactly the same template as described for \kfp in Section~\ref{sec:alg-if}. We first use a standard 2-approximation algorithm for \vkc to determine the set $\calC$. Next we define the constrained problem \fairoptkc which is analogous to \ifa in Section~\ref{sec:alg-if}. As is standard for $k$-center problems, suppose we make the correct `guess' for the optimal radius for the \kcf problem - call it $\rs$. For any client $j$, define $B_j$ to be the ball with center at $j$ and radius $4\rs$. We define the following feasibility mathematical program. A variable $x_{cj}$ is defined  if and only if $c\in B_j$, for all $c\in \calC, j\in V$.
\begin{align}
    \fairoptkc : & \sum_{c\in \calC\cap B_j} x_{cj} = 1 ~~\forall j \in V \, \label{eqn:sum-kc}\\
    & D_{f}(\vec{x}_{j_1} || \vec{x}_{j_2}) \leq \fairness(j_1,j_2) ~~\forall j_1,j_2 \in V 
    \label{eq:ind-fair-LP-kc}\\
    & 0\leq  x_{cj} \leq 1~~\forall j\in V, c\in B_j\cap \calC \label{eqn:fraction-kc}
\end{align}

We return any feasible solution $x$ to the above constrained program as our final solution. In the remainder of the section, we prove that such a solution exists. Let $x^{\star}$ be an optimal solution to \kcf with radius $\rs$.
We again define the mapping $\phi$ from the centers in the support of $x^{\star}$ to those in $\calC$ and a potential solution $x$ to the above LP, exactly in the same way as done in Section~\ref{sec:alg-if} and subsequently used in Claim~\ref{clm:distance}. We define $\supp(x',j)$ as the set of open centers in the support of $x'_j$ for any solution $x'$ to \kcf

\begin{claim}
\label{clm:distancekCenter}
For any point $j\in V$, consider any center $c\in \supp(x,j)$. Let $c_j$ be the  closest center to $j\in \calC$. Then for each $c^{\star}\in \phi^{-1}(c)\cap \supp(x^{\star}, j)$, we have
$$ d(j,c) \leq 2d(j, c^{\star}) + d(j, c_j) $$
\end{claim}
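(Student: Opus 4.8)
The plan is to mimic the $p=1$ instance of Claim~\ref{clm:distance}: the displayed inequality is exactly that special case (the $3^{p-1}$ factor collapses to $1$ when $p=1$), now stated for the $k$-center construction. The only ingredients needed are the triangle inequality in $d$ and the defining property of the map $\phi$ (each center of $\calC^{\star}$ is sent to its nearest center in $\calC$); in particular no convexity/Jensen step is required, since there is no $p$-th power to bound.

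Concretely, I would fix $j\in V$, a center $c\in\supp(x,j)$, and some $c^{\star}\in\phi^{-1}(c)\cap\supp(x^{\star},j)$. Since $\phi(c^{\star})=c$ and $\phi(c^{\star})$ is by definition the center of $\calC$ closest to $c^{\star}$, we have $d(c^{\star},c)\leq d(c^{\star},c')$ for every $c'\in\calC$, and in particular $d(c^{\star},c)\leq d(c^{\star},c_j)$. Chaining triangle inequalities then gives
\begin{align*}
d(j,c) &\leq d(j,c^{\star}) + d(c^{\star},c) \leq d(j,c^{\star}) + d(c^{\star},c_j) \\
&\leq d(j,c^{\star}) + d(c^{\star},j) + d(j,c_j) = 2\,d(j,c^{\star}) + d(j,c_j),
\end{align*}
which is precisely the claim.

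For context --- and to motivate why the radius $4\rs$ shows up in $\fairoptkc$ --- this claim is the structural step that makes the constructed $x$ (obtained from an optimal $\kcf$ solution $x^{\star}$ of radius $\rs$ by pushing probability mass along $\phi$, exactly as in Section~\ref{sec:alg-if}) feasible for $\fairoptkc$. If $x_{cj}>0$ then some $c^{\star}\in\phi^{-1}(c)$ lies in $\supp(x^{\star},j)$, so $d(j,c^{\star})\leq\rs$; and since any feasible $\kcf$ solution induces a vanilla $k$-center assignment of cost at most $\rs$ (assign each $j$ to any center in the support of $\mu_j$), the $2$-approximate center set $\calC$ satisfies $d(j,c_j)\leq 2\rs$. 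Plugging these into the claim yields $d(j,c)\leq 2\rs + 2\rs = 4\rs$, i.e.\ $c\in B_j$, so the variable $x_{cj}$ is present in $\fairoptkc$ and $x$ is feasible.

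The statement itself has essentially no obstacle: it is three applications of the triangle inequality together with the nearest-center definition of $\phi$. The only place needing a little care is the surrounding feasibility argument just sketched --- in particular the bound $\vopt(\calI)\leq\rs$ used to derive $d(j,c_j)\leq 2\rs$ from the vanilla $2$-approximation --- but that is routine and parallels the cost bound in the proof of Lemma~\ref{lem:feasibility}.
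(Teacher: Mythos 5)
Your proof is correct and is exactly the argument the paper uses: it invokes the $p=1$ case of Claim~\ref{clm:distance} (two triangle inequalities plus the nearest-center property of $\phi$), which is precisely your chain $d(j,c)\leq d(j,c^{\star})+d(c^{\star},c_j)\leq 2d(j,c^{\star})+d(j,c_j)$. The extra feasibility discussion you add matches the paper's subsequent lemma and is not part of this claim, but it is consistent with it.
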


The proof is immediate from the first part of the proof for Claim~\ref{clm:distance} and we skip that to avoid repetition. This claim will now give the following lemma.

\begin{lemma}
$x$ is a feasible solution to \fairoptkc.
\end{lemma}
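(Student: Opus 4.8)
The plan is to reuse the proof of Lemma~\ref{lem:feasibility} almost verbatim for the combinatorial and fairness constraints, adding one new ingredient --- a radius bound --- to cope with the fact that in \fairoptkc a variable $x_{cj}$ exists only when $c\in B_j$, i.e.\ only when $d(j,c)\le 4\rs$.

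First I would record that the vanilla $k$-center optimum on $\calI$ is at most $\rs$: take the optimal \kcf solution $x^{\star}$ of radius $\rs$ and assign each point $j$ to an arbitrary center of $\supp(x^{\star},j)$; this is a hard assignment to the (at most $k$) centers $\calC^{\star}$ in which every point lies within $\rs$ of its assigned center. Since $\calC$ comes from a $2$-approximation for \vkc, it follows that $d(j,c_j)\le 2\rs$ for every $j\in V$, where $c_j$ is the nearest center of $\calC$ to $j$. Next I would show that the positive mass of $x_j$ sits inside $B_j$: if $x_{cj}>0$ then, since $x_{cj}=\sum_{c^{\star}\in\phi^{-1}(c)}x^{\star}_{c^{\star}j}$, some $c^{\star}\in\phi^{-1}(c)$ has $x^{\star}_{c^{\star}j}>0$, hence $c^{\star}\in\supp(x^{\star},j)$ and $d(j,c^{\star})\le\rs$; applying Claim~\ref{clm:distancekCenter} with this $c^{\star}$ gives $d(j,c)\le 2d(j,c^{\star})+d(j,c_j)\le 4\rs$, so $c\in B_j$, and every nonzero $x_{cj}$ is therefore a legitimate \fairoptkc variable. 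Note that the hypothesis of Claim~\ref{clm:distancekCenter}, namely $\phi^{-1}(c)\cap\supp(x^{\star},j)\neq\emptyset$, is precisely what ``$x_{cj}>0$'' supplies.

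The remaining constraints then follow as in Lemma~\ref{lem:feasibility}. We have $0\le x_{cj}\le 1$ because $x_{cj}\le\sum_{c^{\star}\in\calC^{\star}}x^{\star}_{c^{\star}j}\le 1$; and $\sum_{c\in\calC\cap B_j}x_{cj}=\sum_{c\in\calC}x_{cj}=\sum_{c^{\star}\in\calC^{\star}}x^{\star}_{c^{\star}j}=1$, using that the previous step places no positive mass outside $B_j$, that $\{\phi^{-1}(c)\}_{c\in\calC}$ partitions $\calC^{\star}$, and that $x^{\star}$ is feasible. For the individual-fairness constraint~\eqref{eq:ind-fair-LP-kc} I would repeat, center by center, the Jensen-inequality computation from the proof of Lemma~\ref{lem:feasibility}: with $B=\sum_{c^{\star}\in\phi^{-1}(c)}x^{\star}_{c^{\star}j_2}$, convexity of $f$ gives $B\,f(\sum_{c^{\star}\in\phi^{-1}(c)}x^{\star}_{c^{\star}j_1}/B)\le\sum_{c^{\star}\in\phi^{-1}(c)}x^{\star}_{c^{\star}j_2}\,f(x^{\star}_{c^{\star}j_1}/x^{\star}_{c^{\star}j_2})$ for every non-empty $c$; summing over $c$ and invoking the partition yields $D_f(\vec{x_{j_1}}\,\|\,\vec{x_{j_2}})\le D_f(\vec{x^{\star}_{j_1}}\,\|\,\vec{x^{\star}_{j_2}})\le\fairness(j_1,j_2)$, where empty centers contribute $0$ under the convention $0\cdot f(0/0)=0$.

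The only genuinely new work relative to Lemma~\ref{lem:feasibility} is the radius accounting, and the main obstacle --- really the only place where care is needed --- is justifying $d(j,c_j)\le 2\rs$, i.e.\ that the vanilla $k$-center optimum is dominated by $\rs$, together with confirming that Claim~\ref{clm:distancekCenter} is applicable. Everything else is a transcription of the earlier proof with the objective simply dropped, since \fairoptkc is a pure feasibility program.
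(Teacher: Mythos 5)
Your proof is correct and follows essentially the same route as the paper's: reuse the Jensen-based argument from Lemma~\ref{lem:feasibility} for the fairness and probability constraints, and handle the radius via Claim~\ref{clm:distancekCenter} together with the facts that the optimal \kcf solution yields a feasible \vkc solution of radius $\rs$ and that $\calC$ is a $2$-approximation, giving $d(j,c)\le 2\rs+2\rs=4\rs$. Your write-up is in fact a bit more explicit than the paper's (which only cites the three facts), particularly in verifying that all positive mass of $x_j$ lands on legitimate variables $c\in B_j\cap\calC$.
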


\begin{proof}
The proof that $x$ satisfies the individual fairness constraints~\eqref{eq:ind-fair-LP-kc} is exactly the same as done in the proof of Lemma~\ref{lem:feasibility}. 

However, we also need to prove that $x$ satisfies the constraints~\eqref{eqn:sum-kc}. Consider any point $j\in V$. Let $c_j$ be the closest center to $j$ in $\calC$. Recall that $x^{\star}$ is an optimal solution to \kcf. 
Clearly $d(c^{\star},j) \leq \rs$ for any $c^{\star}\in \supp(x^{\star},j)$. Also, by definition of the mapping $\phi$, $\sum_{c\in \supp(x,j)} x_{cj} = \sum_{c^{\star}\in \supp(x,j)\cap \phi{-1}(c)} x^{\star}_{c^{\star},j} = 1$, by feasibility of $x^{\star}$. Now consider any $c\in \supp(x, j)$. By Claim~\ref{clm:distancekCenter}, $d(c,j) \leq 2d(j, c^{\star}) + d(j, c_j) $ for any $c^{\star}\in \supp(x^{\star},j)\cap \phi^{-1}(c)$. We use the following three facts --- (1) $\calC$ is a set of centers for a $2$-approximate solution to \vkc, (2) an optimal solution to \kcf is a feasible solution to \vkc, and (3) $x^{\star}$ is an optimal solution to \kcf with radius $\rs$. This gives us $d(c, j) \leq 4\rs$ and we are done.
\end{proof}

Combining all of the above, we have the following theorem.
\begin{theorem}
	\label{thm:individual_fairness_kc}
	There exists a 4-approximation algorithm for \kcf.
\end{theorem}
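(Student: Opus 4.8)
The plan is to verify that the procedure already described — run a $2$-approximation for \vkc to obtain $\calC$, guess the optimal radius $\rs$ of the \kcf instance, form the feasibility program \fairoptkc using balls $B_j$ of radius $4\rs$, and return any feasible point $x$ — is correct, certifies a service radius of at most $4\rs$, and runs in polynomial time (modulo the convex-program oracle, as elsewhere in the paper).

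First I would dispose of the guessing step. Since every point must be served by an open center drawn from $V$ within distance $\rs$, the optimal radius $\rs$ is one of the $O(n^2)$ pairwise distances $d(i,j)$, $i,j\in V$; so I would enumerate these values in increasing order (equivalently, binary search, since feasibility of \fairoptkc is monotone in the guessed radius) and for each candidate $R$ test feasibility of the program obtained with balls of radius $4R$. By the discussion in~\Cref{sec:alg-if}, the constraints~\eqref{eq:ind-fair-LP-kc} are convex in $x$ for the standard choices of $D_f$ (total variation, KL, and more generally any $f$-divergence with convex $f$), so each feasibility test is a convex program solvable in polynomial time. Returning the solution for the smallest feasible $R$, the support restriction built into \fairoptkc (variables $x_{cj}$ exist only for $c\in B_j$) guarantees every center used by $\mu_j=\vec{x}_j$ lies within $4R$ of $j$, and~\eqref{eq:ind-fair-LP-kc} guarantees individual fairness; hence the output is a feasible \kcf solution of radius $4R$.

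Second, I would argue that the guess $R=\rs$ always yields a feasible program — this is exactly the feasibility lemma proved immediately above, whose argument I would invoke. Starting from an optimal \kcf solution $x^{\star}$ with radius $\rs$, define $\phi$ mapping each center in $\supp(x^{\star},j)$ to its nearest center in $\calC$ and set $x_{cj}=\sum_{c^{\star}\in\phi^{-1}(c)}x^{\star}_{c^{\star}j}$. The sum-to-one constraint~\eqref{eqn:sum-kc} and the divergence constraints~\eqref{eq:ind-fair-LP-kc} follow precisely as in~\Cref{lem:feasibility} (the divergence argument goes through Jensen's inequality and is insensitive to $p$), while~\Cref{clm:distancekCenter} combined with three observations — $\calC$ is a $2$-approximation for \vkc, every \kcf solution is feasible for \vkc (so the vanilla $k$-center optimum is at most $\rs$ and therefore $d(j,c_j)\le 2\rs$), and $d(j,c^{\star})\le\rs$ for every $c^{\star}\in\supp(x^{\star},j)$ — yields $d(j,c)\le 2d(j,c^{\star})+d(j,c_j)\le 4\rs$ for every $c\in\supp(x,j)$, i.e. $c\in B_j$. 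Hence $x$ is feasible at $R=\rs$, the enumeration succeeds and returns some $R\le\rs$, and the algorithm outputs a feasible \kcf solution of radius at most $4\rs$: a $4$-approximation. I do not anticipate a genuine obstacle, since everything is assembled from the already-established~\Cref{clm:distancekCenter} and feasibility lemma; the only points warranting explicit mention are that the optimal radius can be restricted to the polynomially many pairwise distances, and that feasibility of \fairoptkc is polynomially testable under the convexity assumption on $D_f$ that the paper already adopts.
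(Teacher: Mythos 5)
Your proposal follows essentially the same route as the paper: run a $2$-approximate \vkc algorithm, guess $\rs$, set up \fairoptkc with balls of radius $4\rs$, and certify feasibility at $R=\rs$ by mapping the optimal centers via $\phi$ and combining~\Cref{clm:distancekCenter} with the facts that $d(j,c^{\star})\le\rs$ and $d(j,c_j)\le 2\rs$, exactly as in the paper's feasibility lemma. Your only addition is spelling out that the guess can be restricted to the $O(n^2)$ pairwise distances and tested by a polynomial-time convex feasibility check, which the paper leaves implicit as the standard guessing technique; the argument is correct.
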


\mypar{Hardness of Individually Fair \boldmath{$k$}-Center}
The NP-hardness of \kcf follows almost immediately from the hardness of \vkc. Suppose $D_{\TV}$ is the choice for $f$-divergence and the fairness similarity measure is $\fairness=d$, the underlying metric. It is a standard fact that the hard instances of \vkc arise from a metric defined by $d(v, v') = 1$ or $d(v, v') = \infty$ (here $\infty$ is a very large number). Now suppose $\calI$ is such an instance of \vkc. Then we have the following lemma.

\begin{lemma}
The instance $\calI$ has a solution with $k$ centers and radius 1 if and only if the corresponding \kcf instance has a solution with radius 1.
\end{lemma}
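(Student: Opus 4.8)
The plan is to prove both implications directly, essentially without computation, by first observing that on a metric whose pairwise distances all lie in $\{1,\infty\}$ the individual fairness constraints~\eqref{eq:ind-fair-kc} are vacuous. Indeed, for any two distributions one has $D_{\TV}(\mu_{j_1} || \mu_{j_2}) = \frac12 \sum_{c} |\mu_{j_1}(c) - \mu_{j_2}(c)| \le 1$, whereas $\fairness(j_1,j_2) = d(j_1,j_2) \ge 1$ for every pair of distinct points of $\calI$ (and both sides equal $0$ when $j_1 = j_2$). Hence \emph{every} collection of distributions $\{\mu_j\}_{j\in V}$ satisfies~\eqref{eq:ind-fair-kc} on this instance, so a feasible \kcf solution with radius $R$ is nothing more than a set of at most $k$ centers together with, for each $j$, a distribution over those centers whose support lies within distance $R$ of $j$.

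For the forward direction, I would take a radius-$1$ solution $\calC$ to \vkc on $\calI$, so $|\calC|\le k$ and each $j\in V$ admits some $c_j\in\calC$ with $d(j,c_j)\le 1$; setting $\mu_j$ to be the point mass at $c_j$ gives $\supp(\mu_j)=\{c_j\}$ within distance $1$ of $j$, and the fairness constraints hold automatically by the observation above. This is a feasible \kcf solution of radius $1$. Conversely, given a \kcf solution of radius $1$ --- centers $\calC$ with $|\calC|\le k$ and distributions $\mu_j$ over $\calC$ --- each $\mu_j$ has nonempty support, so choosing any $c_j\in\supp(\mu_j)$ the support condition forces $d(j,c_j)\le 1$; thus $\calC$ covers $V$ within radius $1$ using at most $k$ centers, i.e.\ it is a radius-$1$ solution to \vkc on $\calI$.

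There is no genuine obstacle in the argument; the one point worth emphasizing is the boundedness $D_{\TV}\le 1$, which is exactly what makes the fairness layer invisible on the standard hard instances of \vkc and lets its NP-hardness transfer verbatim to \kcf. (Note that this step uses the specific choice $f(t)=\frac12|t-1|$: for an unbounded $f$-divergence such as KL-divergence the Dirac assignment above would violate~\eqref{eq:ind-fair-kc}, so a different feasible solution would be needed.)
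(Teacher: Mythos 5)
Your proposal is correct and follows essentially the same argument as the paper: in the forward direction you assign each point the Dirac distribution on its vanilla center and use $D_{\TV}\le 1 \le d(j_1,j_2)$ to see that the fairness constraints are automatically satisfied on the $\{1,\infty\}$-metric hard instances, and the converse is the same immediate support argument the paper dismisses as trivial. Your explicit remarks about the $j_1=j_2$ case and about the argument relying on the boundedness of $D_{\TV}$ (as opposed to, say, KL) are fine additions but do not change the route.
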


\begin{proof}
Suppose $\calI$ is a `yes' instances to \vkc with radius $1$ and suppose $\psi(j)$ be the center to which $j$ has been assigned in such a solution. Now consider the solution to \kcf where for $j$, we return the distribution with $\psi(j)$ as the only center in its support. Since $D_{\TV}$ can take a value of at most $1$ and all distances are either 1 or $\infty$, this solution is individually fair. 

Conversely, if there exists a solution to \kcf with radius 1, then trivially, there exists a solution to \vkc with radius 1. 
\end{proof}

\subsection{On the Price of Achieving Individual Fairness}
\label{appendix:cost}
In this section, we discuss the {\em price} associated with achieving individual fairness. More specifically, we call the ratio of the optimal cost of an individually fair clustering instance, to that of the clustering
instance without the fairness constraints, as the {\em price} of achieving fairness. We give a simple example to show that, perhaps unsurprisingly, the {\em price} of achieving fairness can be arbitrarily large depending on the underlying fairness measure $\fairness$. 

Recall that, for an instance $\cI$ of {\vkp} and {\kfp}, we denote the corresponding optimal costs as 
$\vopt(\cI)$ and $\iopt(\cI)$, respectively. 
We show that the ratio of $\iopt(\cI)$ to that of $\vopt(\cI)$
can be arbitrarily large, depending on the fairness measure $\fairness$.
In~\Cref{fig:cost_of_fairness}, the input instance $\cI$ consists of data points on a line and assume $d(u_2,v_1)\geq R>>r \geq d(u_{1},u_2)\approx d(v_1,v_2)$. Further, assume $k=2$ and $p=1$ ($k$-median). Then, $\vopt(\cI) = O(r)$.
Now let $\fairness(u,v)=\eps$ for some small positive constant $\eps$,
and the measure of the individual fairness is the {\em total variation norm} $D_{\TV}$. Then, in any solution to the \kfp instance, $D_{\TV}(\mu_u || \mu_v) \leq \eps$. This implies, $\iopt(\cI) = O(R) >> \vopt(\cI)$.
A similar argument is true for the case of $k$-means ($p=2$) and $k$-center ($p=\infty$) as well.
\begin{figure}[!ht]
    \centering
\begin{tikzpicture}[dot/.style={circle,inner sep=1.5pt,fill,label=#1},
  extended line/.style={shorten >=-#1,shorten <=-#1},
  extended line/.default=1cm]
\node [dot=$u_1$,name=A] at (1,0) {};
\node [dot,name=B] at (1.5,0) {};
\node [dot=$u$,name=C] at (2,0) {};
\node [dot,name=D] at (2.5,0) {};
\node [dot=$u_2$,name=E] at (3,0) {};

\node [dot=$v_1$,name=V] at (10,0) {};
\node [dot,name=W] at (10.5,0) {};
\node [dot=$v$,name=X] at (11,0) {};
\node [dot,name=Y] at (11.5,0) {};
\node [dot=$v_2$,name=Z] at (12,0) {};

\draw [extended line=0.5cm] (A) -- (Z);

\end{tikzpicture}
    \caption{The {\em price} of achieving individual fairness. Let $k=2$ and $p=1$. Assume $d(u_1,u_2) \approx d(v_1,v_2) \leq r$, $d(u_2,v_1) \geq R$, and $R >> r$. Then, $\vopt = O(r)$. Now assume $\fairness(u,v) = \eps$. Then, $\iopt = O(R) >> \vopt$. }
    \label{fig:cost_of_fairness}
\end{figure}

In this toy example, we have shown that the choice $\fairness$ plays
an important role in determining the {\em price} of achieving fairness. 
In our experiments (\Cref{sec:exp}), we demonstrate a similar effect in  real-world scenarios. We consider two different fairness similarity measures. The first one, $\fairness_1$, is simply the underlying metric feature space $d$. The second one, $\fairness_2$, is an
asymmetric notion where $\fairness(i,j)$ is decided
based on the ``small'' local neighborhood information of $i$ in the feature space (see~\Cref{sec:exp} for exact details). In~\Cref{fig:if_alg_opt}, we 
compare the cost of $\iopt$ vs $\hkm$ (which approximates $\vopt$).
We observe that for $\fairness_1$, the {\em price} of achieving fairness is quite large. In comparison, for $\fairness_2$, we can achieve fairness by almost paying the same cost as that of vanilla solutions.

We emphasize that this discussion is not be confused with the theoretical guarantees of our algorithm (\Cref{alg:kfp}) --- there we bound the cost of our solution with respect to $\iopt(\cI)$. The current discussion, on the other hand, studies the value of $\iopt(\cI)$ itself and highlights the impact of the fairness measure in determining the {\em price} of achieving fairness.

\section{Missing Details from~\Cref{sec:IF_SF}}
\label{appendix:IF_SF}
In this section, we revisit the \total problem and spell out the missing
details from~\Cref{sec:IF_SF}. For the sake of completeness, we first
restate the problem definition.
\begin{definition}[\total]
\label{def:appendx_stat_indv_fairness}
Assume we are give an instance of the \kfp problem. Additionally, we are given $\ell$-many 
(possibly overlapping) protected groups $G_1,G_2,\ldots,G_{\ell}$
and for each such group we are given two input {\em group} fairness parameters $\alpha_i$ and $\beta_i$.
The goal is to output
\begin{inparaenum}[\bfseries (1)]
\item a set of cluster centers ${\calC} \subseteq V$ of size at most $k$ and
\item  a distribution $\mu_j$ over $\calC$ for each point $j \in V$,
\end{inparaenum}
such that 
\begin{enumerate}
    \item For each cluster, the expected fraction of the points from group $G_i$ lies between $\beta_i$ and $\alpha_i$,
    \item $D_f(\mu_{j_1} || \mu_{j_2}) \leq \fairness(j_1,j_2)$ for each pair of points $j_1,j_2 \in G_p$, for all $p\in [\ell]$.
\end{enumerate}
 The objective is to minimize 
 $   \calL_p(\mu, \calC) := \left( \sum_{j\in V} \Ex_{c{\sim}\mu_j}(d(j,c)^p) \right)^\frac{1}{p}$.
\end{definition}
\mypar{Why is Individual Fairness Enforced only Inside Protected Groups?}

\begin{figure}[!ht]
\centering
\begin{minipage}[b]{.6\textwidth}
\centerline{\includegraphics[scale=0.4,bb= {0 0 20in 20in},trim=3cm 2cm 3cm 2cm, clip=true]{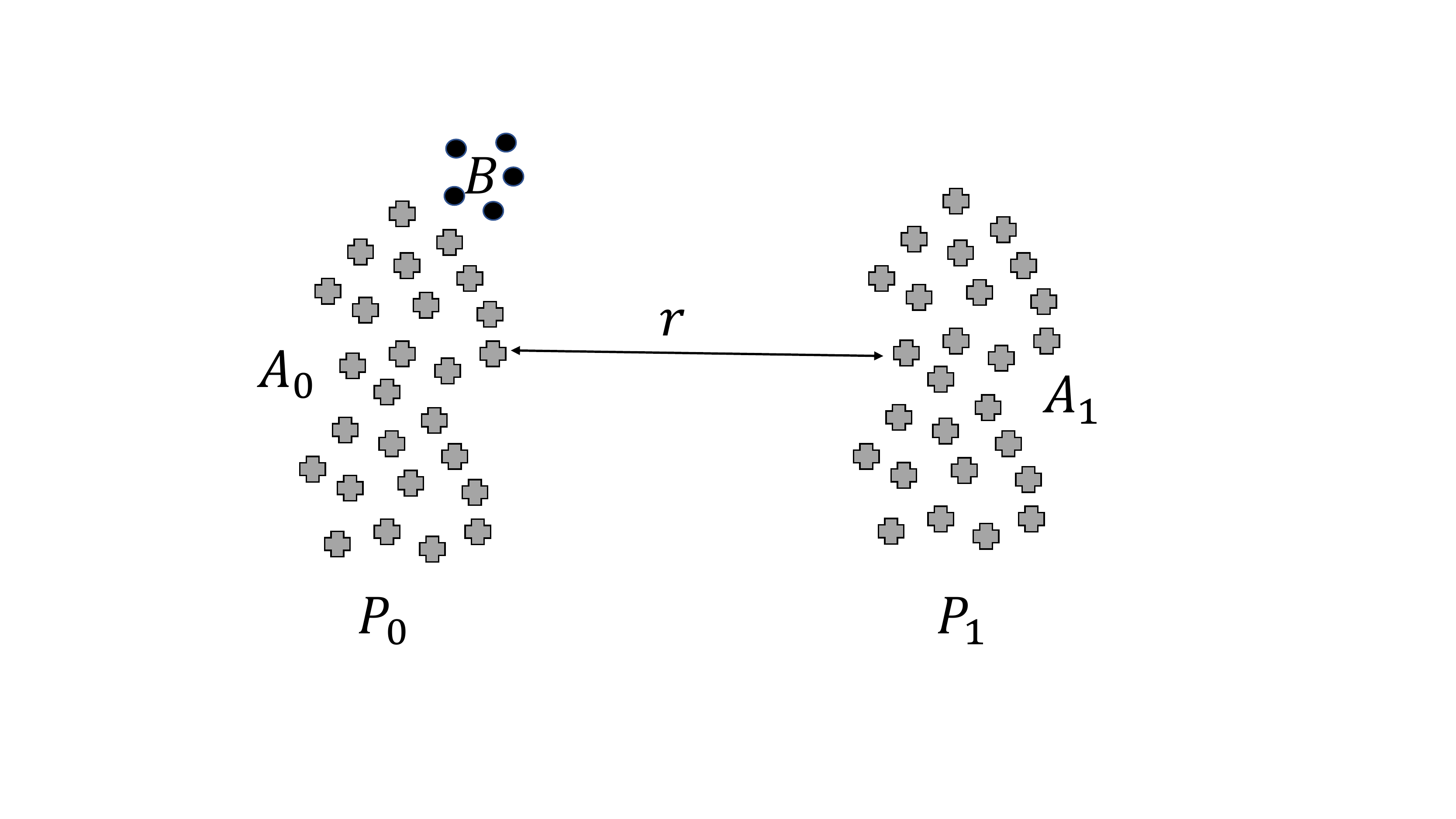}}
\caption{Combining Individual and Group Fairness}
\label{fig:gfInside}
\end{minipage}
\end{figure}
 In Fig~\ref{fig:gfInside}, suppose the entire population of size $N$ is partitioned into two protected groups $A$ and $B$ according to some protected attribute. Let $P_0$ and $P_1$ be two sets of closely packed points separated by a distance $r$, which is a very large number. Further, let $A_0 = P_0 \cap A, A_1 = P_1 \cap A = P_1$ and $B\subset P_0$. Suppose $|B| = \frac{N}{10}$ and $|A_0| = |A_1|$. Hence, the statistical fairness constraints require $10\%$ of each cluster to be formed with points from $B$ and $90\%$ from $A$. If we impose individual fairness only inside $B$ and $A$, then a reasonable solution would be to assign all points in $A_0$ to centroid of $P_0$ and $A_1$ to that of $P_1$, each with probability 1. Further, we can assign each point in $B$ to each of the centers with probability 0.5.
 
 On the other hand, imposing individual fairness across every pair of points requires that points in $B$ and $A_0$ have roughly the same distribution, since the radius of $P_0$ is very small compared to $r$. As before, due to statistical fairness conditions, the distributions of $A_0$ and $A_1$ also needs to be approximately the same. 
 This will result in a trivial solution where each point is assigned to each centroid with roughly probability 0.5. 
 The discussion above closely follows a similar discussion in the paper by Dwork et al.~\cite{dwork2012fairness} where they term the notion of combined fairness as {\em fair affirmative action}. 

\mypar{On the Feasibility of a Combined Fair Clustering Instance}Before describing an algorithm for computing an {\em approximate} solution, we address the question of finding a feasible solution to the \total problem instance. Note that in the absence of the group fairness constraints, it is trivial to construct a feasible solution to the
\kfp problem. Indeed, we can simply assign to each point $j$ in $V$
a uniform distribution $\mu_j$ over any arbitrary set of clusters centers $\cC$ ($|{\cC}| \leq k$). By definition of $f$-divergence, $D_f(\mu_{j_1}||\mu_{j_2}) = 0$ for all pair of points $j_1,j_2 \in V$.
Since, the fair similarity measure $\fairness$ is non-negative, this satisfies the individual fairness constraints (~\Cref{eq:ind-fair}).
Can we verify the feasibility of a \total instance {\em efficiently}?
We answer this question in affirmative. In fact, we give a simple
condition in the following claim for the existence of a feasible solution. We remark that such a claim holds true for the
group fairness problem considered in~\cite{bera19,Bercea2018} as well.
\begin{claim}
\label{claim:combine_feasibility}
Given an instance $\cI=(V,k,f,\fairness,\alpha,\beta)$ to 
\total, there exists a feasible solution to $\cI$, iff the following 
condition is true:
\begin{align}
\label{eqn:condition}
    \beta_r |V| \leq |G_r| \leq \alpha_r |V|~~\forall r \in [\ell] 
\end{align}
\end{claim}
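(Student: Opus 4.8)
The plan is to argue both directions in terms of the assignment variables $x_{cj}$ — the probability that point $j$ is mapped to center $c$ under $\mu_j$ — exactly as in the mathematical program of~\Cref{sec:alg-if}, and to adopt the ratio-of-expectations reading of ``expected fraction'' used in~\cite{bera19}: the expected fraction of $G_r$-points in cluster $c$ is $\bigl(\sum_{j\in G_r}x_{cj}\bigr)\big/\bigl(\sum_{j\in V}x_{cj}\bigr)$.

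\textbf{Necessity.} Suppose $\cI$ admits a feasible solution $(\calC,\mu)$ and fix a group $G_r$. For every center $c\in\calC$ the group-fairness requirement gives $\beta_r\sum_{j\in V}x_{cj}\le\sum_{j\in G_r}x_{cj}\le\alpha_r\sum_{j\in V}x_{cj}$. I would sum these inequalities over all $c\in\calC$, swap the order of summation, and then invoke the normalization $\sum_{c\in\calC}x_{cj}=1$ for each $j$ to get $\sum_{c\in\calC}\sum_{j\in V}x_{cj}=|V|$ and $\sum_{c\in\calC}\sum_{j\in G_r}x_{cj}=|G_r|$. This yields $\beta_r|V|\le|G_r|\le\alpha_r|V|$, and since $r$ was arbitrary, condition~\eqref{eqn:condition} holds.

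\textbf{Sufficiency.} Conversely, assume~\eqref{eqn:condition}. Pick any nonempty $\calC\subseteq V$ with $|\calC|\le k$ and assign to every $j\in V$ the uniform distribution $\mu_j$ over $\calC$. For individual fairness, $D_f(\mu_{j_1}\|\mu_{j_2})=\sum_{c\in\calC}\frac{1}{|\calC|}f(1)=0\le\fairness(j_1,j_2)$ for every pair (in particular for pairs within a protected group), using $f(1)=0$ and $\fairness\ge0$. For group fairness, in each cluster $c$ the expected number of points is $|V|/|\calC|$ and the expected number of $G_r$-points is $|G_r|/|\calC|$, so the expected fraction is $|G_r|/|V|\in[\beta_r,\alpha_r]$ by~\eqref{eqn:condition}. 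Hence the uniform assignment is feasible, which completes the proof.

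\textbf{Main obstacle.} The argument is essentially bookkeeping; the only genuine subtlety is fixing the precise semantics of ``expected fraction of points from $G_r$ in a cluster.'' I would use the ratio-of-expectations convention (rather than the expectation of the random ratio $|G_r\cap c|/|c|$, which is awkward when a cluster can be empty in a given realization) — this is exactly what makes the summation in the necessity direction telescope cleanly and what the uniform construction satisfies with equality.
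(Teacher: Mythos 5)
Your proposal is correct and follows essentially the same route as the paper: the necessity direction sums the per-center constraints $\beta_r\sum_{j\in V}\mu_j[c]\le\sum_{j\in G_r}\mu_j[c]\le\alpha_r\sum_{j\in V}\mu_j[c]$ over $c\in\calC$ and uses $\sum_{c}\mu_j[c]=1$, while the sufficiency direction uses the uniform distribution over an arbitrary set of at most $k$ centers, exactly as in the paper. Your remark on the ratio-of-expectations semantics also matches how the paper encodes group fairness in its assignment program, so there is no substantive difference.
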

\begin{proof}
We first prove the ``if'' direction. 
For each point $j\in V$, let $\mu_j$ be a uniform distribution 
over an arbitrary set of $k$ centers $\cC\subseteq V$.
Then, $\mu_j$ is a feasible solution to $\cI$. We have already
argued above that such uniform distributions trivially satisfy
individual fairness constraints between each pair of points in 
$V$, and hence for each pair of points inside each protected group as well. Now, fix a cluster $c\in \cC$ and a protected group 
$G_r$, for $r\in [\ell]$.
The expected number of points assigned to the cluster
$c$ from the group $G_r$ is $|G_r|/|\cC|$.
The expected size of the cluster with cluster center $c$ is $|V|/|\cC|$.
Then, the condition in~\Cref{eqn:condition} immediately implies group fairness.

We now prove the ``else if'' direction. Let $(\cC,\{\mu_j\}_{j\in V})$ be some feasible solution to the instance. Let $\mu_j[c]$ denote the probability of assigning $j$ to the cluster center $c$. We then use the sub-additive property of the group fairness constraints to argue that ~\Cref{eqn:condition} must hold. 
More formally, group fairness implies for each center $c\in \cC$ and for each $r \in [\ell]$, we have
\begin{align*}
    \beta_r \sum_{j\in V} \mu_{j}[c] \leq \sum_{j\in G_r} \mu_{j}[c] \leq \alpha_r \sum_{j\in V} \mu_{j}[c] \,.
\end{align*}
Summing over all $c\in \cC$ and rearranging, we get
\begin{align*}
    \beta_r \sum_{c\in \cC}\sum_{j\in V} \mu_{j}[c] &\leq \sum_{c\in \cC}\sum_{j\in G_r} \mu_{j}[c] \leq \alpha_r \sum_{c\in \cC} \sum_{j\in V} \mu_{j}[c] \,, \\
    \Rightarrow \beta_r \sum_{j\in V} \sum_{c\in \cC} \mu_{j}[c] &\leq \sum_{j\in G_r}\sum_{c\in \cC} \mu_{j}[c] \leq \alpha_r \sum_{j\in V} \sum_{c\in \cC}  \mu_{j}[c] \,, \\
    \Rightarrow \beta_r |V| &\leq |G_r| \leq \alpha_r |V| \,.
\end{align*}
This completes the proof of the claim.
\end{proof}

\mypar{Algorithm for the Combined Fair Clustering Problem}We now discuss our algorithm for solving the \total problem.
Recall that, our algorithmic strategy is to first solve the \vkp problem on the input instance to find the cluster centers and then use these cluster centers to solve a
fair assignment problem. For completeness we present it 
formally in~\Cref{alg:total}. We describe the fair assignment problem
as an optimization problem below and denote it as the \totalassgn problem.
\begin{align}
\label{eqn:total_fair_assgn}
    \min& \sum_{j\in V}\sum_{c\in \calC} x_{cj} d(i,j)^p  \\
    \text{s.t.}~ & \sum_{c\in \calC} x_{cj} = 1 ~~\forall j \in V \, \nonumber\\
    & D_{f}(\vec{x_{j_1}} || \vec{x_{j_2}}) \leq 
\fairness(j_1,j_2) ~~\forall r \in [\ell] , j_1,j_2 \in G_r 
     \nonumber\\
    & \beta_r \sum_{j\in V} x_{cj} \leq  \sum_{j\in G_r} x_{cj} \leq \alpha_r \sum_{j\in V} x_{cj}, \forall c\in \calC\,, r \in [\ell]  \nonumber\\ 
    & 0\leq  x_{ij} \leq 1 \nonumber
\end{align}
The second constraint enforces individual fairness between points in the same protected group and the third constraint ensures group fairness on the solution. 

We now prove~\Cref{thm:combined_fairness} that captures our main result on the \total problem. For completeness, we restate the theorem here.
\IFSF*
\begin{proof}
The proof of this theorem follows along the line of the proof of 
~\Cref{thm:individual_fairness}. 
\begin{algorithm}[!ht]
    \caption{Algorithm for \total}
    \label{alg:total}
    \begin{algorithmic}[1]
        \STATE {\bfseries $\algtf(\calI)$} 
        \STATE Use a $\rho$-approximation algorithm for \vkp on $\calI$ --- let $\calC$ be the set of centers.
        \STATE Solve the \totalassgn~ problem on instance $\calJ = (V,\calC,f,\alpha,\beta)$ --- let ${\mu}$ be the solution. 
        \STATE  return $(\calC,{\mu})$
    \end{algorithmic}
\end{algorithm}

Assume $(\calC^{\star},x^{\star})$ be an optimal solution to the instance $\calI$ of \total
and $\algtf(\calI)$ returns $(C,\mu)$.
We construct a feasible solution $x$ to the \totalassgn($\calJ=(V,C,f,\fairness,\alpha,\beta)$) using $\calC^{\star}$ and $x^{\star}$.
$\algtf$ outputs the optimal solution to \totalassgn, hence
$\calL_p(\mu, \calC) \leq \calL_p(x, \calC)$. So, as in the proof of~\Cref{thm:individual_fairness},
it is sufficient to bound $\calL_p(x, \calC)$ to prove the approximation ratio of $\algtf$.

Recall the definition of the {\em nearest} function $\phi$ and its inverse:
$\phi(c^{\star}) = \arg\min_{c\in \calC} d(c,c^{\star})$ for each $c^{\star} \in \calC^{\star}$,
and $\phi^{-1}(c) = \{c^{\star}\in \calC^{\star}: \phi(c^{\star})=c\}$ for each $c \in \calC$.
For each $j\in V$ and each $c\in \calC$, 
set $x_{cj} = \sum_{c^{\star} \in \phi^{-1}(c) }x^{\star}_{c^{\star}j}$. In words,
for a fixed point $j\in V$ and a fixed center $c\in \calC$, we look at the centers 
in the optimal solution that are mapped to $c$ by $\phi$, and sum the 
corresponding probabilities to get $x_{cj}$. In the remaining, we 
prove that $x$ is a feasible solution to \totalassgn$(\calJ)$ and bound its cost.
\begin{lemma}
\label{lem:total:feasibility}
$x$ is feasible to \totalassgn($\calJ$).
\end{lemma}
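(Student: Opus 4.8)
The plan is to check, one family at a time, that the aggregated assignment $x$ satisfies every constraint of \totalassgn($\calJ$); most of the work is a direct transcription of the proof of \Cref{lem:feasibility}, and only the group-fairness constraints require a genuinely new (but short) argument.

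First I would verify that each $\vec{x_j}$ is a probability distribution. Since $\phi^{-1}$ partitions $\calC^{\star}$, we get $0 \le x_{cj} = \sum_{c^{\star}\in\phi^{-1}(c)} x^{\star}_{c^{\star}j} \le \sum_{c^{\star}\in\calC^{\star}} x^{\star}_{c^{\star}j} = 1$ and $\sum_{c\in\calC} x_{cj} = \sum_{c\in\calC}\sum_{c^{\star}\in\phi^{-1}(c)} x^{\star}_{c^{\star}j} = \sum_{c^{\star}\in\calC^{\star}} x^{\star}_{c^{\star}j} = 1$; this is word-for-word the first part of the proof of \Cref{lem:feasibility}. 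Next I would handle the within-group $f$-divergence constraints: fixing $r\in[\ell]$ and $j_1,j_2\in G_r$, for every non-empty $c\in\calC$ I apply Jensen's inequality (\Cref{lem:jensen}) to the convex function $f$ with weights $x^{\star}_{c^{\star}j_2}/B$ where $B=\sum_{c^{\star}\in\phi^{-1}(c)} x^{\star}_{c^{\star}j_2}$, multiply through by $B$, sum over $c\in\calC$, and use that $\phi^{-1}$ partitions $\calC^{\star}$ to conclude $D_f(\vec{x_{j_1}} \| \vec{x_{j_2}}) \le D_f(\vec{x^{\star}_{j_1}} \| \vec{x^{\star}_{j_2}}) \le \fairness(j_1,j_2)$, the last inequality holding because $x^{\star}$ is feasible for \total and $j_1,j_2$ belong to the same group. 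Empty centers ($\phi^{-1}(c)=\emptyset$) are ignored exactly as before, using the convention $0\cdot f(0/0)=0$. Since the combined-fairness problem only requires this bound for pairs inside a common protected group, nothing in this step changes.

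The new ingredient is the group-balance constraint, and here I would exploit that the balance constraints are linear in the assignment variables, so they survive the aggregation $x_{cj}=\sum_{c^{\star}\in\phi^{-1}(c)} x^{\star}_{c^{\star}j}$. Fix $c\in\calC$ and $r\in[\ell]$. Feasibility of $x^{\star}$ for \total gives, for each $c^{\star}\in\phi^{-1}(c)$,
\[
\beta_r \sum_{j\in V} x^{\star}_{c^{\star}j} \;\le\; \sum_{j\in G_r} x^{\star}_{c^{\star}j} \;\le\; \alpha_r \sum_{j\in V} x^{\star}_{c^{\star}j}.
\]
Summing these inequalities over $c^{\star}\in\phi^{-1}(c)$ and swapping the order of the two summations yields $\beta_r \sum_{j\in V} x_{cj} \le \sum_{j\in G_r} x_{cj} \le \alpha_r \sum_{j\in V} x_{cj}$, which is precisely the group-fairness constraint of \totalassgn($\calJ$) at center $c$ for group $G_r$; if $\phi^{-1}(c)=\emptyset$ then $x_{cj}=0$ for all $j$ and the constraint holds trivially.

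I do not anticipate a real obstacle: the distribution and divergence parts are inherited essentially verbatim from \Cref{lem:feasibility}, and the only genuinely fresh step — group balance — is a one-line summation that uses linearity of the balance constraints. The mild care needed is the bookkeeping for empty centers, handled identically to \Cref{lem:feasibility}. Once feasibility is established, the cost bound $\calL_p(x,\calC)\le 3^{1-1/p}(\rho+2)\cdot\iopt(\calI)$ claimed in \Cref{thm:combined_fairness} follows exactly as in the proof of \Cref{lem:feasibility}, invoking \Cref{clm:distance} and the fact that $\vopt(\calI)\le\iopt(\calI)$.
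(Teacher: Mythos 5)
Your proposal is correct and follows essentially the same route as the paper: the distribution and $f$-divergence constraints are inherited verbatim from the first part of Lemma~\ref{lem:feasibility}, and the group-fairness constraints are obtained exactly as the paper does, by summing the per-center balance inequalities of $x^{\star}$ over $c^{\star}\in\phi^{-1}(c)$ (the paper's ``sub-additivity'' argument), with empty centers handled trivially. No gaps.
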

\begin{proof}
It follows from the proof of first part of~\Cref{lem:feasibility} that
$x$ satisfies all the constraints in the \totalassgn LP (~\cref{eqn:total_fair_assgn})
barring the {\em group fairness} constraints. The {\em group fairness} follows
by the sub-additivity of the constraints. We show this formally below. For any center $c\in \calC$,
if $\phi^{-1}(c)=\emptyset$, then the corresponding {\em group fairness} constraints are
trivially satisfied. Now assume $\phi^{-1}(c)\neq \emptyset$. Fix a group $G_r$. 
\begin{align*}
    \sum_{j \in G_r} x_{cj} &= \sum_{j \in G_r}  \sum_{c^{\star} \in \phi^{-1}(c)} x^{\star}_{c^{\star}j} \,, \\
    &= \sum_{c^{\star} \in \phi^{-1}(c)} \sum_{j \in G_r}   x^{\star}_{c^{\star}j} \,,\\
    &\leq \sum_{c^{\star} \in \phi^{-1}(c)} \alpha_r \sum_{j\in V} x^{\star}_{c^{\star}j}\,,~~\text{(by optimality of $x^{\star}$)} \\
    &= \alpha_r  \sum_{j\in V} \sum_{c^{\star} \in \phi^{-1}(c)} x^{\star}_{c^{\star}j} \,, \\
    &= \alpha_r \sum_{j\in V} x_{cj} \,.
\end{align*}
Similarly, we can show that $\sum_{j\in G_r} x_{cj} \geq \beta_r  \sum_{j\in V} x_{cj}$, proving that
$x$ is a feasible solution to the \totalassgn($\calJ$) LP.
\end{proof}

We next bound the cost of the solution $x$.
\begin{lemma}
\label{lem:total:cost}
$\calL_p(x, \calC) \leq 3^{\left( 1-\frac{1}{p} \right) } \left( \rho+2 \right)\cdot \calL_p(x^{\star}, \calC^{\star})$.
\end{lemma}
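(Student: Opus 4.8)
The plan is to mirror the proof of Lemma~\ref{lem:feasibility} (the cost half), since the construction of $x$ from $x^{\star}$ via the nearest-center map $\phi$ is identical here — only the feasibility bookkeeping for the group-fairness constraints differs, and that has already been handled in Lemma~\ref{lem:total:feasibility}. So the work left is purely about bounding $\calL_p(x,\calC)$ in terms of $\calL_p(x^{\star},\calC^{\star})$, and this is word-for-word the argument already carried out in the second part of the proof of Lemma~\ref{lem:feasibility}.

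First I would, for each point $j\in V$, set $d(j)=\sum_{c\in\calC}x_{cj}d(c,j)^p$ and $d^{\star}(j)=\sum_{c^{\star}\in\calC^{\star}}x^{\star}_{c^{\star}j}d(j,c^{\star})^p$, and let $c_j$ be the closest center to $j$ in $\calC$. Expanding $x_{cj}=\sum_{c^{\star}\in\phi^{-1}(c)}x^{\star}_{c^{\star}j}$ and applying Claim~\ref{clm:distance} termwise gives
\[
d(j)\;\le\;3^{p-1}\sum_{c\in\calC}\sum_{c^{\star}\in\phi^{-1}(c)}x^{\star}_{c^{\star}j}\bigl(2d(j,c^{\star})^p+d(j,c_j)^p\bigr)\;=\;2\cdot 3^{p-1}\,d^{\star}(j)+3^{p-1}d(j,c_j)^p,
\]
where the last equality uses that $\phi^{-1}$ partitions $\calC^{\star}$ and $\sum_{c^{\star}\in\calC^{\star}}x^{\star}_{c^{\star}j}=1$. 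Summing over $j\in V$ yields $\sum_j d(j)\le 3^{p-1}\bigl(2\sum_j d^{\star}(j)+\sum_j d(j,c_j)^p\bigr)$.

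Next I would bound $\sum_j d(j,c_j)^p$. Since $\calC$ is a $\rho$-approximate solution to \vkp and (by the same argument as in Lemma~\ref{lem:feasibility}) any feasible solution to \total is also feasible for \vkp, we have $\bigl(\sum_j d(j,c_j)^p\bigr)^{1/p}\le\rho\cdot\vopt(\calI)\le\rho\cdot\calL_p(x^{\star},\calC^{\star})$; also $\bigl(\sum_j d^{\star}(j)\bigr)^{1/p}=\calL_p(x^{\star},\calC^{\star})$ by definition. Writing $M=\calL_p(x^{\star},\calC^{\star})^p$, we get $\sum_j d(j)\le 3^{p-1}(2M+\rho^p M)\le 3^{p-1}(\rho+2)^p M$, using $2+\rho^p\le(\rho+2)^p$ for $\rho\ge 0,\,p\ge 1$. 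Taking $p$-th roots gives $\calL_p(x,\calC)\le 3^{1-1/p}(\rho+2)\cdot\calL_p(x^{\star},\calC^{\star})$, which is the claim. The theorem then follows by combining this with $\calL_p(\mu,\calC)\le\calL_p(x,\calC)$ (optimality of $\algtf$ on \totalassgn) and Lemma~\ref{lem:total:feasibility}.

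There is essentially no obstacle here: the only subtlety — ensuring $x$ remains feasible after the regrouping of probabilities, in particular that the group-fairness constraints survive — is exactly what Lemma~\ref{lem:total:feasibility} established via sub-additivity, and the cost estimate is a verbatim replay of Lemma~\ref{lem:feasibility}. If anything needs care it is just noting explicitly that $\calL_p(x^{\star},\calC^{\star})$ plays the role that $\iopt(\calI)$ played before, so one should state "a feasible solution to \total restricted to opening centers is feasible for \vkp" to justify the $\rho$-approximation step; this is immediate.
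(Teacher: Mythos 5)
Your proposal is correct and follows essentially the same route as the paper: the paper's proof of Lemma~\ref{lem:total:cost} simply declares it identical to the second (cost) part of the proof of Lemma~\ref{lem:feasibility}, and your write-up is exactly that argument spelled out — expand $x_{cj}$ via $\phi^{-1}$, apply Claim~\ref{clm:distance}, use $\sum_j d(j,c_j)^p \le (\rho\,\vopt(\calI))^p$ with $\vopt(\calI)\le\calL_p(x^{\star},\calC^{\star})$, and the elementary bound $2+\rho^p\le(\rho+2)^p$. No gaps; your explicit note that $\calL_p(x^{\star},\calC^{\star})$ takes over the role of $\iopt(\calI)$ is the only point the paper leaves implicit.
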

The proof of this lemma is identical to the proof of second part of~\Cref{lem:feasibility}. Together~\Cref{lem:total:feasibility,lem:total:cost} prove the claim in the
theorem.
\end{proof}

\mypar{Combined Fair \boldmath{$k$}-Center Problem}
We remark here that the Combine Fair $k$-center problem
needs to be treated slightly differently, as we discussed in
~\Cref{appendix:k_center}. The details are analogous, and we refrain
from repeating them here. 

\section{Individual Fairness to Group Fairness when $\fairness = d$}
\label{appendix:relation}

In this section, we explore the connection between the notion of 
individual fairness and group fairness for the special case of
$\fairness=d$. In particular, we are interested in 
finding conditions under which individually fair clustering solutions guarantees group fairness. Such connections are well-known in the context
of classification problems~\cite{dwork2012fairness}. We show that similar connections exist in the clustering context. 

Before we discuss the technical details, it is perhaps imperative to discuss the
apparent tension between the two notions of fairness in the context of clustering. 
Individual fairness is modeled after the concept of ``equality of treatment'' whereas group fairness is modeled after ``equality of outcome''. In clustering, to ensure the later,
some point $v$ might be assigned to a cluster center that is not the closet center to $v$. 
However, such assignments might be unfair to $v$ if its close neighbors are assigned to a 
center that was in fact closest to $v$ as well. Indeed, we demonstrate this aspect in~\Cref{fig:combined_fairness}.
Nevertheless, if the ``spread'' of the points from each protected group are ``similar''
to each other in the metric space, we expect that individual fairness will lead to
equality of outcome as well. In this section, we formally establish this connection.

We define a quantity {\em maximum additive violation}, and denote it as $\MAD$,
which captures the unfairness of the clusters for each protected group. This quantity helps us establish a connection between group fairness and individual fairness. Recall that,
a solution to the \kfp instance assigns to each point $x\in V$ a distribution $\mu_x$ over the set of
cluster centers $\calC$. For
each protected groups $G_1,G_2,\ldots,G_{\ell}$, let $p_r= |G_r|/|V|$ for each $r\in [\ell]$.
Then, for the protected group $G_r$, we define $\MAD_r$ as follows:
\begin{align}
    \MAD_r = \max_{i \in \calC} \left|  \sum_{j\in G_r} \mu_{x}(i) - p_r\sum_{j\in V} \mu_{x}(i)\right|
\end{align}
To elucidate further on this definition, note that in the entire population,
$p_r$ fraction of the points belong to group $G_r$, and hence, we expect that
each cluster will have the same proportional representation for group $G_r$. 
The notion of $\MAD_r$ captures the additive deviation from this expected number.
We remark here that our notion of $\MAD_r$ is consistent with the 
most general group fairness constraints defined in prior works~\cite{bera19,Bercea2018}. 
Indeed, the $(\alpha,\beta)$-group fairness formulation~\cite{bera19} aims at 
providing a desired bound on the quantity $\MAD_r$.
For each protected group $G_r$, we define a distribution
$\nu_{G_r}$ as a uniform distribution over the set of points in $G_r$.
In particular, $\nu_{G_r}(x) = 1/|G_r|$ if $x\in G_r$, and $0$ otherwise.
Let $\nu_V$ be the uniform distribution over the set of all points:
$\nu_V(x) = 1/|V|$ for all $x\in V$. Let $d_{\EM}(S,T)$ be the 
Earthmover's distance between the distribution $S$ and $T$, introduced formally in~\cite{RubnerTG98}.
Our main result of this section is the following lemma.
\begin{restatable}{lemma}{relation}
\label{lem:relation}
Let $(\calC, {\mu})$ be any feasible solution to the \kfp problem 
instance with $D_f$ as the statistical similarity measure. Further suppose the $f$-divergence $D_f(\mu_x||\mu_y) \geq D_{\TV}(\mu_x, \mu_y)$ for all pair of points $x,y\in V$. 
Then, For each group $G_r$, $\MAD_r \leq |G_r| \cdot d_{\EM}(\nu_{G_r},\nu_V)$
\end{restatable}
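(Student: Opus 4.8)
The plan is to fix a protected group $G_r$ and a center $i \in \calC$, and bound the additive violation $\left|\sum_{j \in G_r}\mu_j(i) - p_r\sum_{j \in V}\mu_j(i)\right|$ directly. First I would rewrite this quantity in terms of the uniform distributions $\nu_{G_r}$ and $\nu_V$: since $\sum_{j \in G_r}\mu_j(i) = |G_r|\sum_{j}\nu_{G_r}(j)\mu_j(i)$ and $p_r\sum_{j \in V}\mu_j(i) = \frac{|G_r|}{|V|}\sum_{j}\mu_j(i) = |G_r|\sum_j \nu_V(j)\mu_j(i)$, the violation becomes $|G_r|\cdot\left|\sum_{j \in V}(\nu_{G_r}(j) - \nu_V(j))\,\mu_j(i)\right|$. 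So it suffices to show that for every center $i$, $\left|\sum_j (\nu_{G_r}(j) - \nu_V(j))\mu_j(i)\right| \le d_{\EM}(\nu_{G_r},\nu_V)$, and then take the max over $i \in \calC$.

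Next I would invoke the dual (Kantorovich--Rubinstein) characterization of the Earthmover's distance: $d_{\EM}(\nu_{G_r},\nu_V) = \max_{g}\left\{\sum_j (\nu_{G_r}(j)-\nu_V(j))g(j)\right\}$, where the maximum is over functions $g$ that are $1$-Lipschitz with respect to the ground metric on $V$ — here the metric induced by $\fairness = d$. Thus it is enough to show the function $g(j) := \mu_j(i)$ is $1$-Lipschitz, i.e. $|\mu_{j_1}(i) - \mu_{j_2}(i)| \le d(j_1,j_2)$ for all $j_1,j_2 \in V$. This is where the individual-fairness constraint enters: by feasibility, $D_f(\mu_{j_1}\|\mu_{j_2}) \le \fairness(j_1,j_2) = d(j_1,j_2)$, and by the hypothesis $D_f \ge D_{\TV}$, we get $D_{\TV}(\mu_{j_1},\mu_{j_2}) \le d(j_1,j_2)$. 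Since $D_{\TV}(\mu_{j_1},\mu_{j_2}) = \frac12\sum_{c \in \calC}|\mu_{j_1}(c) - \mu_{j_2}(c)| \ge |\mu_{j_1}(i) - \mu_{j_2}(i)|$ for the single coordinate $i$ (in fact even the sharper bound $\max_c$ holds), we conclude $|\mu_{j_1}(i)-\mu_{j_2}(i)| \le d(j_1,j_2)$, so $g$ is $1$-Lipschitz. Plugging $g$ into the dual expression gives $\sum_j(\nu_{G_r}(j)-\nu_V(j))\mu_j(i) \le d_{\EM}(\nu_{G_r},\nu_V)$; applying the same argument to $-g$ (also $1$-Lipschitz) handles the absolute value, and taking the maximum over $i\in\calC$ and multiplying by $|G_r|$ yields $\MAD_r \le |G_r|\cdot d_{\EM}(\nu_{G_r},\nu_V)$.

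The main obstacle — or rather the only point requiring care — is lining up the definition of the Earthmover's distance being used (it must be the transportation distance whose ground cost is exactly the metric $d$ on $V$) with the duality statement, and checking that the single-coordinate total-variation bound is exactly the Lipschitz condition the dual needs; both are routine once the setup is made precise. One should also note that $\nu_{G_r}$ and $\nu_V$ have the same total mass ($=1$), so the Earthmover's distance is well defined, and that the ground space for the transport is $V$ with metric $d$ restricted to $V$, which is a genuine metric. Everything else is bookkeeping.
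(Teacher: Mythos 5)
Your argument is correct, and it reaches the bound by a more streamlined route than the paper. The paper first reduces the problem to bi-point distributions (a separate claim, proved by aggregating the centers where the group is over- versus under-represented), then writes an explicit primal LP (\LPB) for the resulting bias, relaxes it, takes its dual (\LPBD), and finally maps any feasible transport plan of the Earthmover LP (\LPEM) to a feasible dual solution, concluding by weak LP duality. You skip the bi-point reduction entirely: for a fixed center $i$ you observe that $g(j):=\mu_j(i)$ is $1$-Lipschitz with respect to $d$, since feasibility with $\fairness=d$ plus the hypothesis $D_f\geq D_{\TV}$ gives $D_{\TV}(\mu_{j_1},\mu_{j_2})\leq d(j_1,j_2)$ and a single-coordinate difference is always at most $D_{\TV}$; then the (easy direction of the) Kantorovich--Rubinstein/transportation duality applied to $g$ and $-g$, together with the rewriting $\MAD_r=|G_r|\max_{i\in\calC}\bigl|\sum_{j}(\nu_{G_r}(j)-\nu_V(j))\mu_j(i)\bigr|$, yields the claim. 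This is the same duality phenomenon the paper exploits, but your choice of test function makes the explicit primal--dual bookkeeping and the bi-point reduction unnecessary; note you only need weak duality (any $1$-Lipschitz function lower-bounds $d_{\EM}$ via any transport plan), so no appeal to the full equality version of Kantorovich--Rubinstein is required. The paper's more hands-on LP treatment has the minor virtue of staying self-contained and mirroring the analogous argument in Dwork et al., but your proof is shorter and, if anything, makes the role of the total-variation hypothesis more transparent.
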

 \begin{proof}
For a fixed group $G_r$, we first define the notion of bias of the group, which is essentially an upper bound on the quantity $\MAD_r$. For convenience, let us extend the notation of $\MAD_r$ to $\MAD_r(\mu,D_f)$ to include the underlying $f$-divergence function and the distributions $\mu_x$.   

Let $\mu'_x$ be a bi-point distribution defined over the set of all points $x\in V$ that satisfy the individual fairness constraints. Then, 
\begin{align*}
\bias(r,D_f) = \max_{\mu'} \MAD_r(\mu',D_f)     
\end{align*}
Note that we have restricted the definition of  $\bias(r,D_f)$ with respect to all possible bi-point distributions. In the next claim, we justify this. We remark here that a similar observation is made by
~\cite{dwork2012fairness} in the context of classification problems.
\begin{claim}
Suppose we have distributions $\mu_x$ defined over a set of $k$ points for all $x\in V$, where $k \geq 2$, where the distributions $\mu$ satisfy the individual fairness constraints~\eqref{eq:ind-fair}. Then,
\begin{align*}
\max_{\mu} \MAD_r(\mu ,D_f) \leq \bias(r,D_f)     
\end{align*}
\end{claim}
\begin{proof}
Suppose we are given the distributions $\mu_x, x\in V$ over a set of centers $\calC$. We define  corresponding bi-point distributions $\mu'_x$ over a set $\calC'=\{c_1, c_2\}$ on two centers. Let $\calC_1 = \{i\in \calC : \mu_{G_r}(i) > p_r\mu_{V}(i)\}$ and $\calC_2 = \calC\setminus \calC_1\}$. Assign $\mu_x'(c_1) = \mu_x(\calC_1), \mu_x'(c_2) = \mu_x(\calC_2)$. First we claim that $\mu'_x, \forall x\in V$ satisfies the individual fairness constraints~\eqref{eq:ind-fair}. The proof is exactly the same as that in Lemma~\ref{lem:feasibility}. 

Next we prove that $\max_{\mu} \MAD_r(\mu ,D_f) \leq \bias(r,D_f) $. This follows using the definitions.
\begin{align*}
&\MAD_r(\mu ,D_f) \\
&= \max_{i\in \calC} |\mu_{G_r}(i) - p_r\cdot \mu_{V}(i) | \\
&=  \max \left\{\max_{i'\in \calC_1} (\mu_{G_r}(i') - p_r\cdot \mu_{V}(i')), 
\max_{i'\in \calC_2} (\mu_{G_r}(i') - p_r\cdot \mu_{V}(i'))\right\} \\
&\leq \max \left\{\sum_{i'\in \calC_1} (\mu_{G_r}(i') - p_r\cdot \mu_{V}(i')),
\sum_{i'\in \calC_2} (p_r\cdot \mu_{V}(i') - \mu_{G_r}(i'))\right\} \\
&= \max_{i\in \{c_1,c_2\}} |\mu'_{G_r}(i) - p_r\cdot \mu'_{V}(i) | \\
&\leq \bias(r, D_f)
\end{align*}
\end{proof}

We first show that if the $f$-divergence function is indeed $D_{\TV}$, then the above lemma holds. The proof follows a framework similar to that in~\cite{dwork2012fairness}. However, we need to make non-trivial modifications to handle our definition of $\MAD_r$.

We show how to upper bound the quantity $\mu'_{G_r}(c_1) - p_r\cdot \mu'_V(c_1)$. An analogous proof can be done for $c_2$. The high-level idea of the proof is as follows. We write a maximization linear program that finds the bi-point distributions $\mu_x'$ for all $x\in V$ that satisfy individual fairness with respect to $D_{\TV}$. The dual to a relaxation of this program will turn out to be the minimization linear program, whose solution gives exactly the the Earthmover's distance between $\nu_{G_r}, \nu_{V}$, up to a scaling factor of $\gr$. The claim then follows from weak duality. 
\begin{align*}
    \LPB: \max &\sum_{x\in V}\nu_{G_r}(x)\cdot \mu'_x(c_1) \\ &- \gr\cdot \sum_{x\in V} \nu_V(x)\cdot \mu'_x(c_1) \\
    \text{s.t.: } \mu'_x(c_1) + \mu'_x(c_2) &= 1 \\
    \mu'_x(c_1) - \mu'_y(c_1) &\leq d(x,y), \forall x,y\in V \\
    \mu'_x(c_1) &\geq 0, \forall x\in V
\end{align*}

Here $\mu'_x(c_1), \mu'_x(c_2), \forall x\in V$ are the variables. The first constraint ensures that they form a distribution while the second one enforces the individual fairness constraints with respect to $D_{\TV}$. Here we are using the fact that $D_{\TV} (\mu'_x, \mu'_y) \leq d(x,y)$ is equivalent to $|\mu'_x(c_1) - \mu'_y(c_1)| \leq d(x,y)$ since the distribution is bi-point. Note that we have to write this constraint for every ordered pair $x,y\in V$.  We relax the above LP by removing the first set of constraints and take the dual. 
\begin{align*}
    \LPBD: \min &\sum_{x,y\in V} \lambda(x,y)d(x,y) \\
    \text{s.t.: } \sum_{y\in V} \lambda(x,y) &\geq \sum_{y\in V} \lambda(y,x) \\
    & + \nu_{G_r}(x) - \gr\nu_{V}(x), \forall x\in V \\
    \lambda(x,y) &\geq 0, \forall x,y\in V
\end{align*}

Finally, recall that the Earthmover's distance between the distributions $\nu_{G_r}$ and $\nu_{V}$ is given by the following LP.
\begin{align*}
    \LPEM: \min &\sum_{x,y\in V} \lambda(x,y)d(x,y) \\
    \text{s.t.: } \sum_{y\in V} \lambda(x,y) &= \nu_{G_r}(x), \forall x\in V \\ 
    \sum_{y\in V} \lambda(y,x) &= \nu_{V}(x) \forall x\in V \\
    \lambda(x,y) &\geq 0, \forall x,y\in V
\end{align*}

Now, for any feasible solution $\lambda^{\star}$ to \LPEM, we can create a feasible solution to \LPBD as follows. For $\lambda^{\star}(x,y)$ appearing in the first set of constraints, we define the corresponding $\hat{\lambda}(x,y)$ for \LPBD to be the same. For $\lambda^{\star}(x,y)$ appearing in the second set of constraints, we set  $\hat{\lambda}(y,x) = \gr\lambda^{\star}(y,x)$. It is straightforward to observe that $\hat{\lambda}$ is a feasible solution to \LPBD. Putting everything together and using weak duality, we can conclude that the optimal solution to \LPB is upper bounded by $\gr$ times the Earthmover's LP optimal, and we are done.

Finally, if $D_f(\mu_x || \mu_y) \geq D_{\TV}(\mu_x, \mu_y)$, then any set of distributions which satisfies individual fairness with respect to $D_f$ will also form a feasible solution to \LPB. Hence, we have the lemma.
\end{proof}
\section{Additional Experiments}
\label{supp:exp}

In this section, we present additional experiments and the plots mentioned in~\Cref{sec:exp}, for all the datasets. We also show the practical running time of $\algif$. 

\begin{table}[ht]
\centering
\caption{Running time of $\algif$ for $k = 4$, $m = 250$, enforcing $\fairness_1$, on creditcard dataset for different sample sizes}
\vspace{2mm}
\begin{tabular}{ c c c c c c }
\hline
 \Tstrut
 \textbf{Number of sampled points} & 500 & 1000 & 2000 & 3000 & 4000 \\
 \textbf{Time (in seconds)} & 80 & 436 & 2901 & 10113 & 32896 \Bstrut\\
 \hline
\end{tabular}
\label{app:tab:runtime}
\end{table}

{\bf Running time.}
In this paper, we provide a generic framework and do not emphasize on running time optimization. \Cref{app:tab:runtime} shows the running time of $\algif$ on creditcard dataset for $k = 4$ and $m = 250$, enforcing fairness similarity $\fairness_1$. Although we solve a linear program with around $10,000,000$ constraints and variables, we observe that CPLEX solves it in around 9 hours. 

\begin{table}[ht]
\centering
\caption{Percentage of individual fairness constraint violations of $\skm$ when $\skm$ and $\algif$ incur the same clustering cost.}
\vspace{1mm}
\subfloat[][Fairness similarity $\calF_1$]{
     \begin{tabular}{ c c c c c }
     \label{app:tab:skmviolations_global}
         Clusters ($k$) & $4$ & $6$ & $8$ & $10$ \\
         \hline
          Bank   & 95 & 98 & 99 & 99 \\
          Adult   & 88 & 94 & 98 & 99 \\
          Creditcard   & 61 & 76 & 83 & 85 \\
          Census1990   & 25 & 34 & 44 & 50 \\
          Diabetes   & 53 & 68 & 63 & 82 \\
         \hline
    \end{tabular}
}
\qquad
\subfloat[][Fairness similarity $\calF_2$]{
    \begin{tabular}{ c c c c c }
    \label{app:tab:skmviolations_local}
        Clusters ($k$) & $4$ & $6$ & $8$ & $10$ \\
        \hline
        Bank   & 4 & 4 & 4 & 4 \\
        Adult   & 4 & 5 & 7 & 8 \\
        Creditcard   & 6 & 5 & 6 & 6 \\
        Census1990   & 7 & 11 & 13 & 11 \\
        Diabetes   & 4 & 4 & 7 & 7 \\
        \hline
    \end{tabular}
}
\label{app:tab:skmviolations}
\end{table}

\begin{figure}[ht]
    \centering
    \subfloat[][Fairness similarity $\calF_1$] {
        \includegraphics[width=0.45\linewidth]{./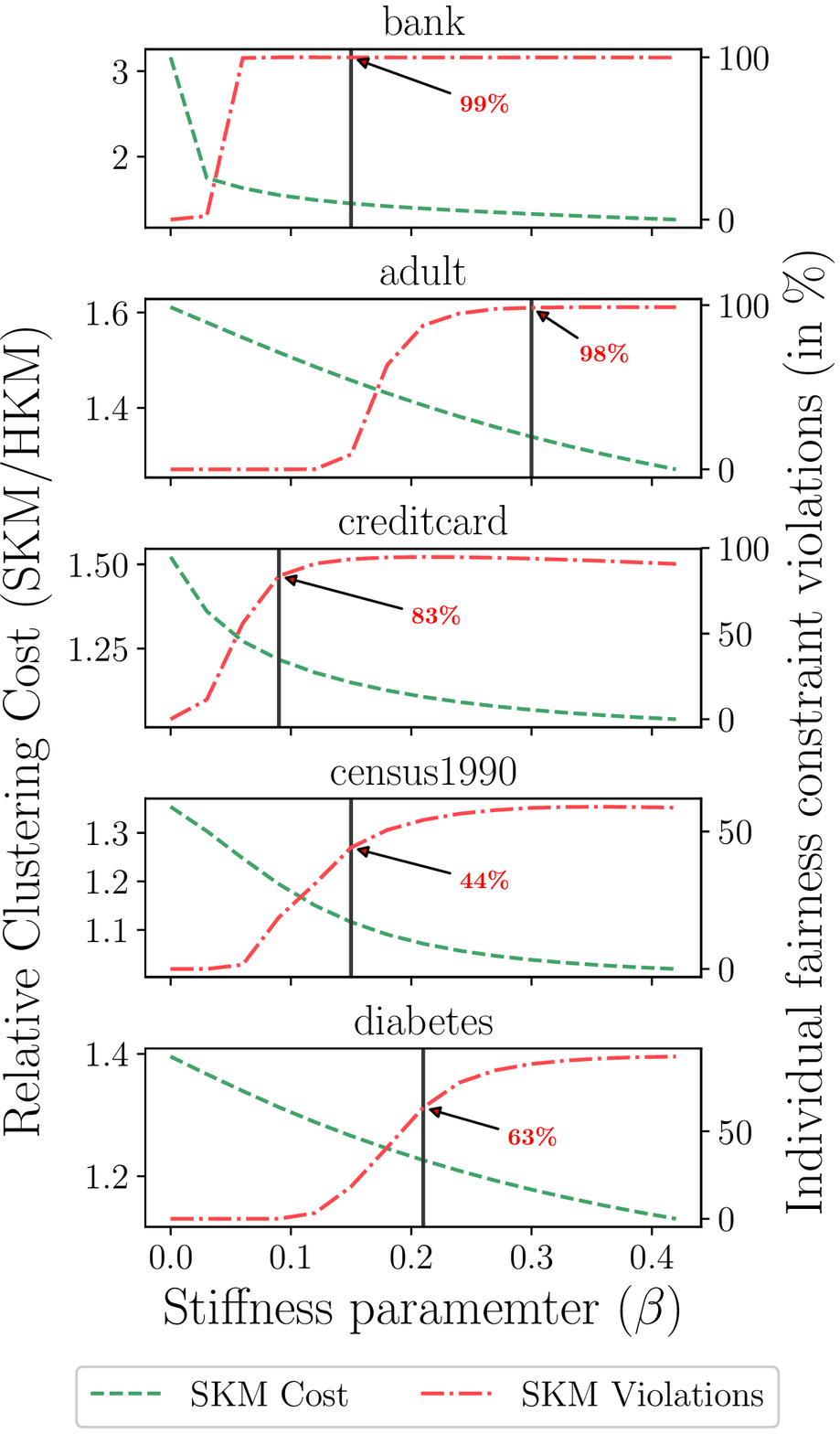}
        \label{app:fig:skmviolations_global}
    }
    \qquad
    \subfloat[][Fairness similarity $\calF_2$] {
        \includegraphics[width=0.45\linewidth]{./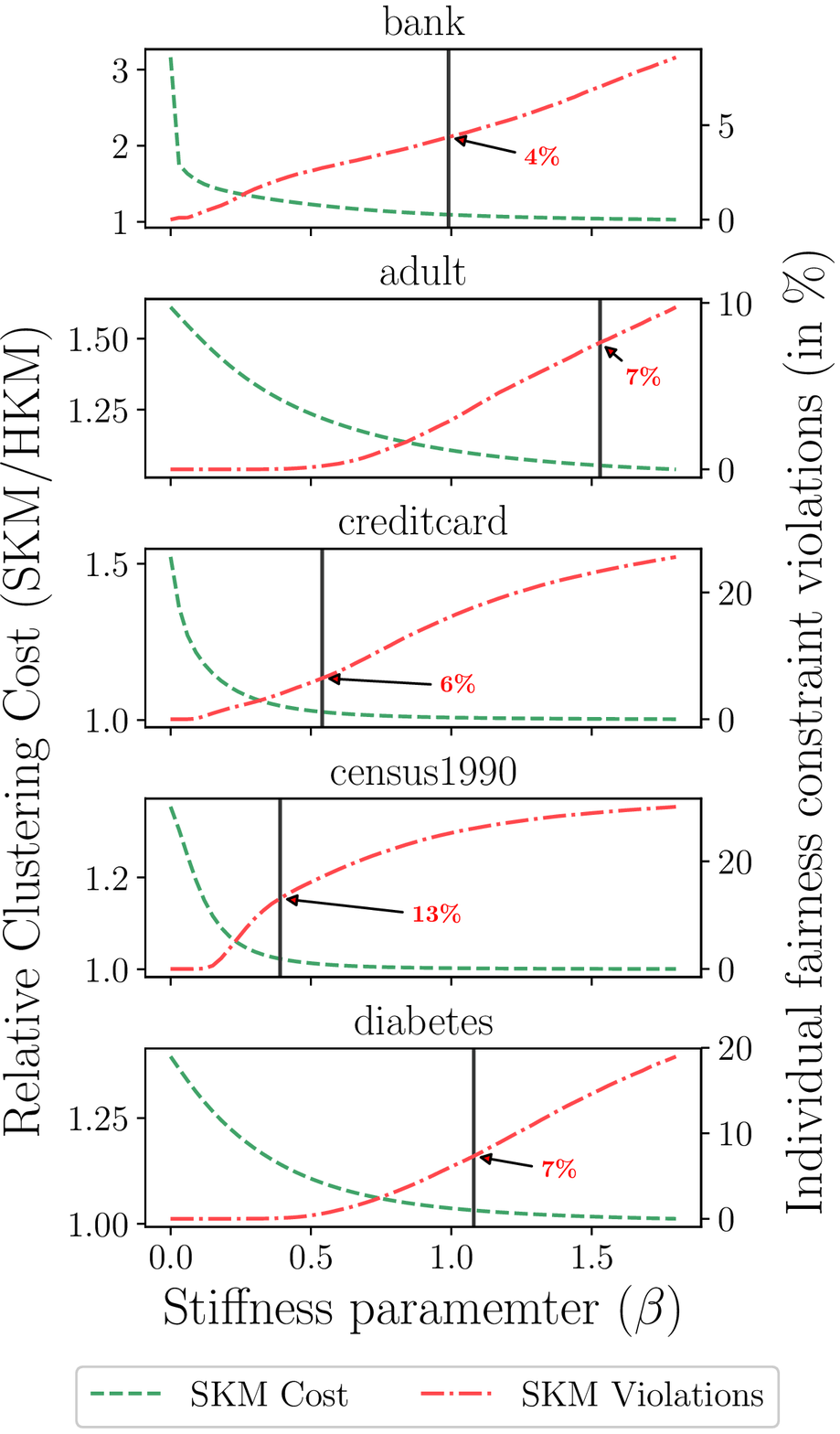}
        \label{app:fig:skmviolations_local}
    }
    \caption{Percentage of individual fairness constraint violations and relative clustering cost of $\skm$ vs stiffness parameter $\beta$ for $k = 8$. The vertical black line shows the $\beta$ at which $\skm$ and $\algif$ incur the same clustering cost.}
    \label{app:fig:skmviolations}
\end{figure}

{\bf Unfairness of $\boldsymbol{\skm}$. }
The output of $\skm$ depends on stiffness parameter $\beta$ introduced in \cite{skm-lecture-notes}. More specifically, when $\beta = 0$, we get a uniform distribution over the centers, which guarantees individual fairness at a very high cost. On the other hand, when $\beta \to \infty$, we get a low cost $\hkm$ solution, which is unfair to individuals. In~\Cref{app:fig:skmviolations}, we show the variation of clustering cost and percentage of individual fairness constraints violated ($\fairness_1$ and $\fairness_2$) by $\skm$ for different values of $\beta$. In~\Cref{app:tab:skmviolations} (extension of~\Cref{tab:skmviolations}), we find $\beta$ at which $\skm$ and $\algif$ incur the same clustering cost and observe the percentage of individual fairness constraints violated ($\fairness_1$ and $\fairness_2$). Note that $\fairness_2$ is a much relaxed fairness measure compared to $\fairness_1$: for each point, similarity is measured locally, with respect to its $\floor{{|V|}/{k}}$ nearest neighbors. Even with such relaxations, $\skm$ exhibits unfair treatment of similar points. Our solution does not violate any individual fairness constraints.

\begin{figure}[ht]
    \centering
    \subfloat[][Fairness similarity $\calF_1$] {
        \includegraphics[width=0.45\linewidth]{./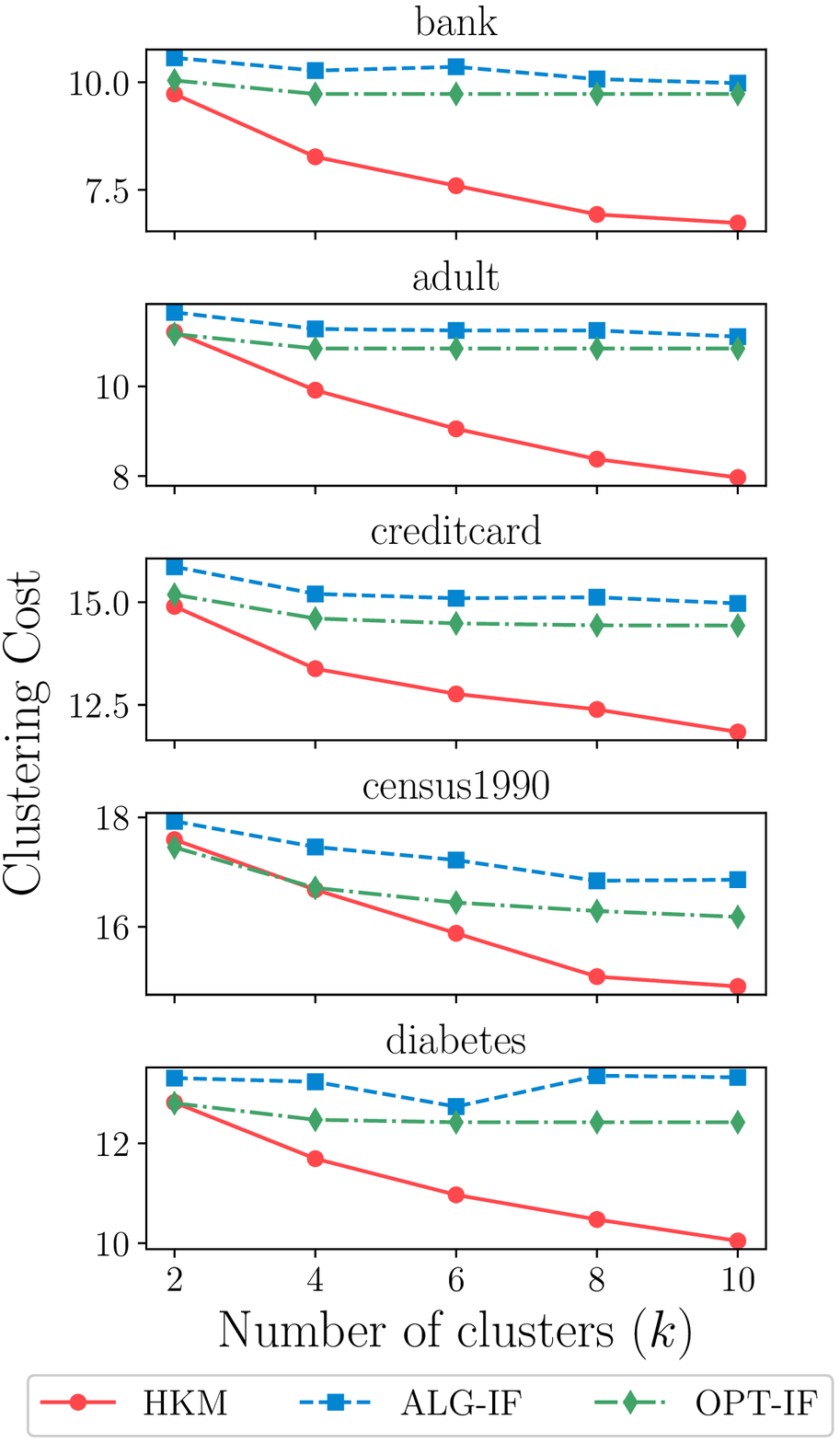}
        \label{app:fig:if_alg_opt_global}
    }
    \qquad
    \subfloat[][Fairness similarity $\calF_2$] {
        \includegraphics[width=0.45\linewidth]{./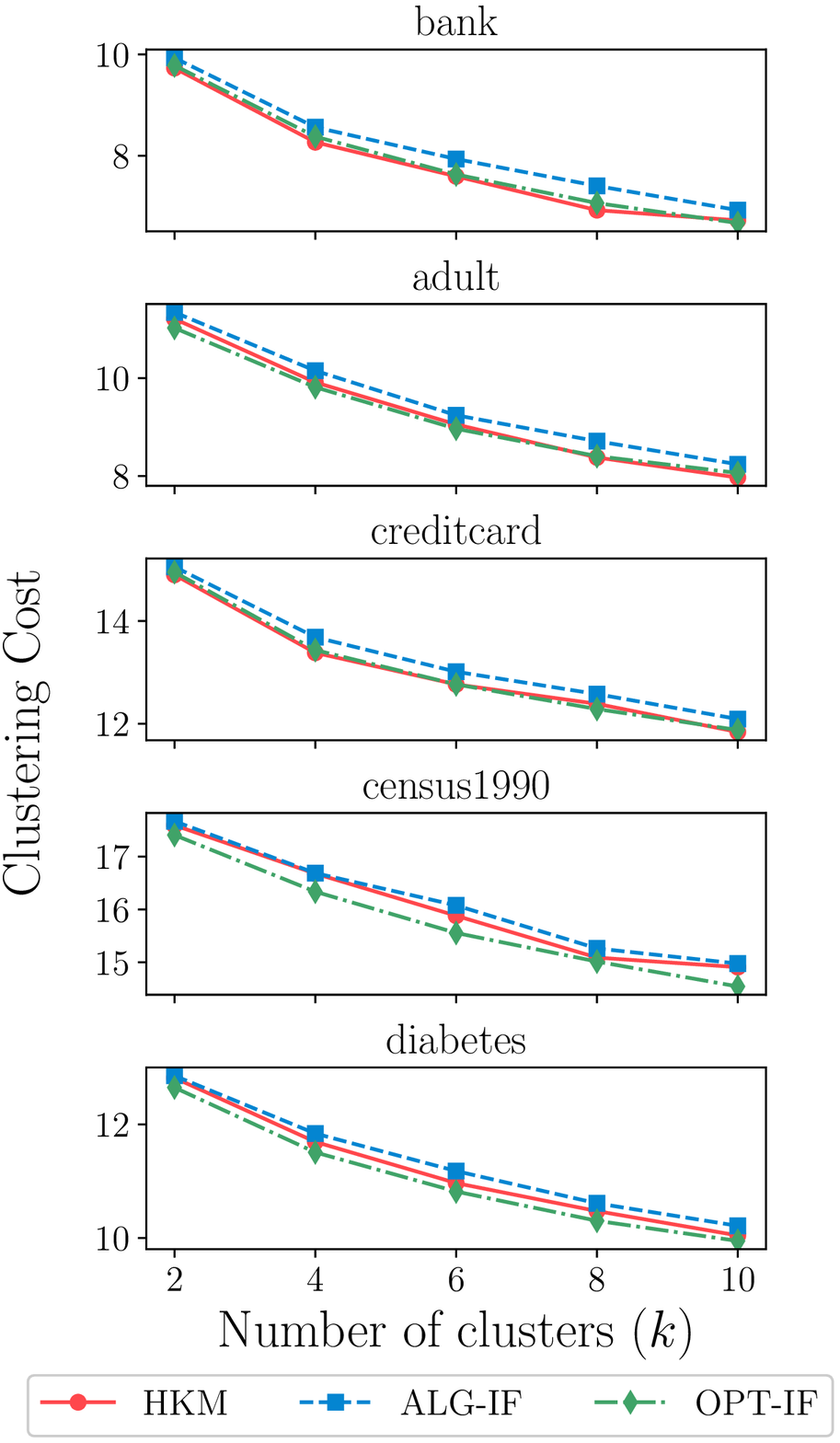}
        \label{app:fig:if_alg_opt_local}
    }
    \caption{Clustering cost vs number of clusters for $\algif$, $\optif$ and $\hkm$.}
    \label{app:fig:if_alg_opt}
\end{figure}

{\bf Cost Analysis. }
In this section, we present the plots for all the datasets, comparing the cost of $\algif$ and $\algtf$ against $\optif$ and $\opttf$, respectively, as shown in~\Cref{app:fig:if_alg_opt} and~\Cref{app:fig:cf_alg_opt} (extension of~\Cref{fig:if_alg_opt} and~\Cref{fig:cf_alg_opt}).

\begin{figure}[ht]
    \centering
    \subfloat[][Fairness similarity $\calF_1$] {
        \includegraphics[width=0.45\linewidth]{./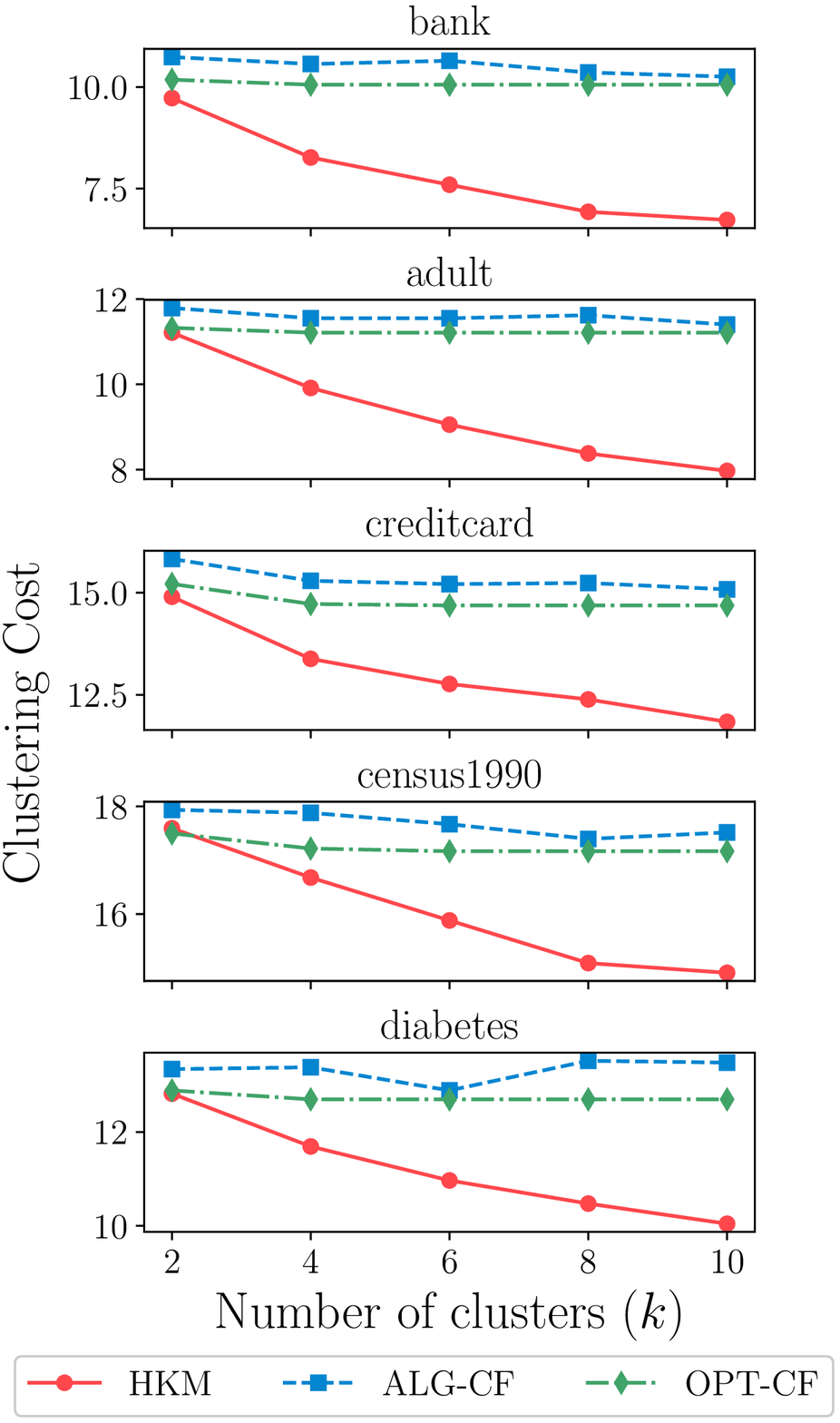}
        \label{app:fig:cf_alg_opt_global}
    }
    \qquad
    \subfloat[][Fairness similarity $\calF_2$] {
        \includegraphics[width=0.45\linewidth]{./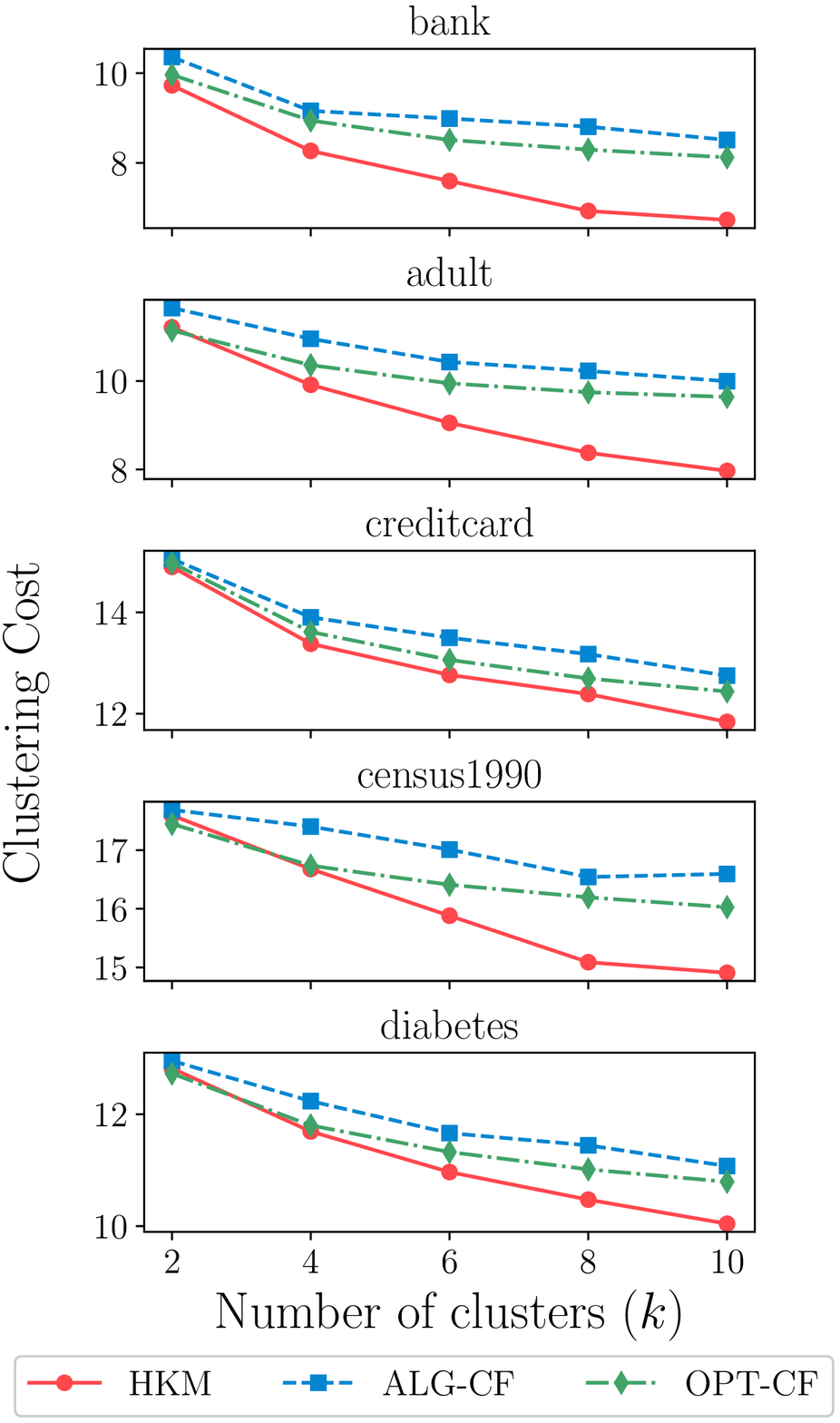}
        \label{app:fig:cf_alg_opt_local}
    }
    \caption{Clustering cost vs number of clusters for $\algtf$, $\opttf$ and $\hkm$.}
    \label{app:fig:cf_alg_opt}
\end{figure}

\begin{figure}[ht]
    \centering
    \includegraphics[width=\linewidth]{./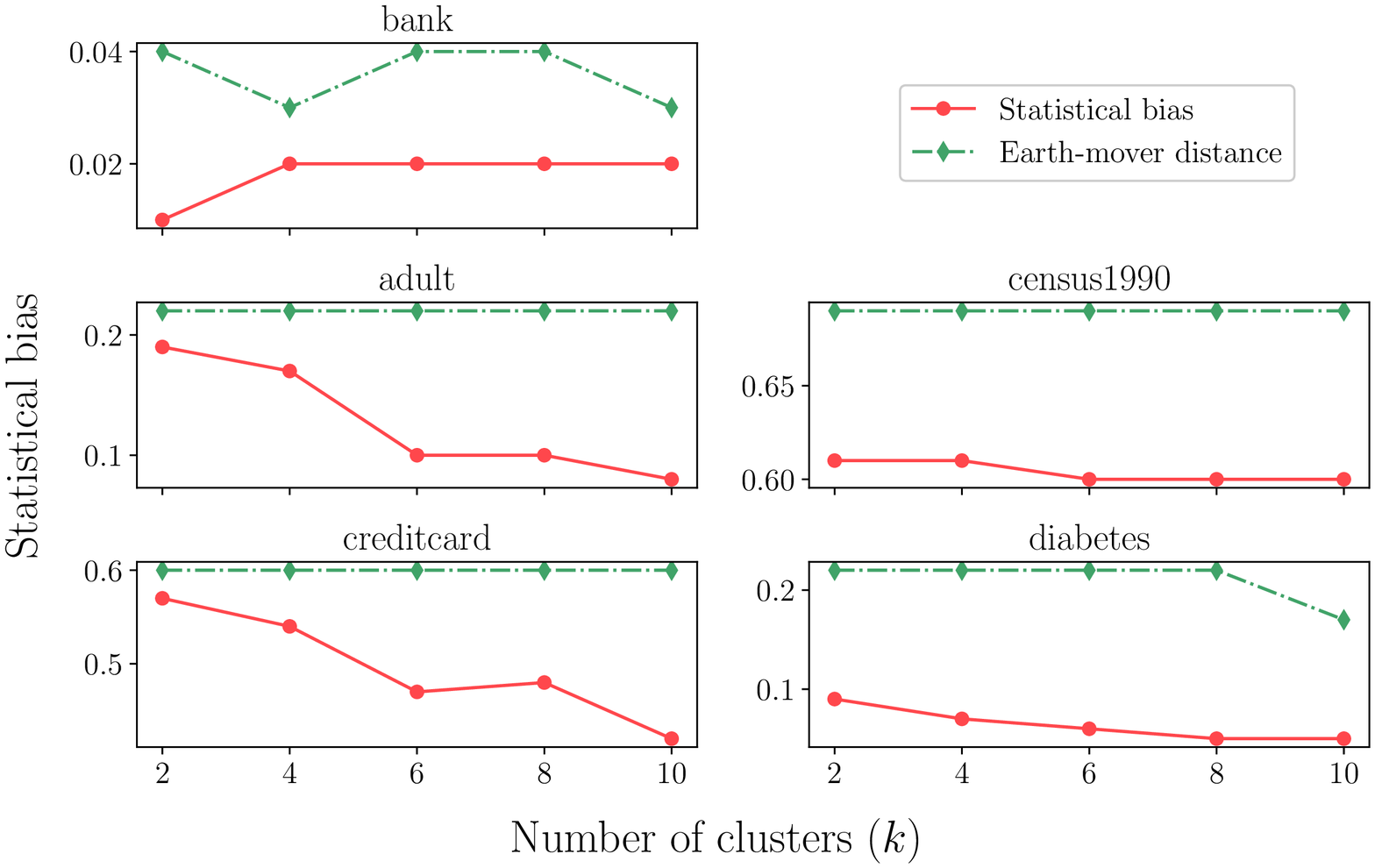}
    \caption{Variation of statistical bias and earth-mover distance vs number of clusters ($k$). We observe that statistical bias is upper bounded by earth-mover distance as suggested by~\Cref{lem:relation} and the gap is tight in practice.}
    \label{app:fig:biasearthmover}
\end{figure}

{\bf Individual Fairness to Group Fairness under $\fairness_1$. }
In this experiment, we use $\fairness_1$ as fairness similarity measure and $D_{TV}$ as statistical distance measure. Let $r = \argmax_r \frac{\MAD_r}{|G_r|}$. We plot statistical bias defined by $\frac{\MAD_r}{|G_r|}$ and the corresponding earth-mover distance $d_{\EM}(\nu_{G_r},\nu_V)$ as shown in~\Cref{app:fig:biasearthmover}. As~\Cref{lem:relation} suggests, we observe that $\frac{\MAD_r}{|G_r|} \leq d_{\EM}(\nu_{G_r},\nu_V)$. Moreover, the gap between statistical bias and earth-mover distance is tight in practice.

\end{document}